\documentclass{article}

% if you need to pass options to natbib, use, e.g.:
%     \PassOptionsToPackage{numbers, compress}{natbib}
% before loading neurips_2024

% ready for submission
\usepackage[nonatbib,preprint]{neurips_2025}

% to compile a preprint version, e.g., for submission to arXiv, add add the
% [preprint] option:
%     \usepackage[preprint]{neurips_2025}

% to compile a camera-ready version, add the [final] option, e.g.:
%     \usepackage[final]{neurips_2025}

% to avoid loading the natbib package, add option nonatbib:
%    \usepackage[nonatbib]{neurips_2025}
\usepackage{multirow}

\usepackage[utf8]{inputenc} % allow utf-8 input
\usepackage[T1]{fontenc}    % use 8-bit T1 fonts
\usepackage{hyperref}       % hyperlinks
\usepackage{url}            % simple URL typesetting
\usepackage{booktabs}       % professional-quality tables
\usepackage{amssymb}
\usepackage{amsmath}
\usepackage{amsthm}
\usepackage{nicefrac}       % compact symbols for 1/2, etc.
\usepackage{microtype}      % microtypography
\usepackage[dvipsnames]{xcolor}         % colors
\usepackage{enumitem}
\usepackage{graphicx}
\usepackage{caption}
\usepackage{subcaption}
\usepackage{xfrac}
\usepackage{soul} % DELETE
\usepackage[defernumbers=true]{biblatex}

\addbibresource{biblio.bib}

\newcommand{\defeq}{\stackrel{\text{def}}{=}}

\newcommand{\bbE}{\mathbb{E}}

\newcommand{\bbR}{\mathbb{R}}

\newcommand{\cL}{\mathcal{L}}
\newcommand{\cN}{\mathcal{N}}
\newcommand{\cS}{\mathcal{S}}
\newcommand{\cT}{\mathcal{T}}

\newcommand{\cA}{\mathcal{A}}

\newcommand{\cH}{\mathcal{H}}

\everypar{\looseness=-1}
\allowdisplaybreaks
\setlist[enumerate]{leftmargin=*}
\setlist[itemize]{leftmargin=*}

\newcommand{\E}[2][]{\bbE_{#1}\!\left[ #2 \right]}
\newcommand{\EX}[3][]{\tilde \bbE^{#2}_{#1}\!\left[ #3 \right]}
\newcommand{\EXD}[2][]{\EX[#1]{\alpha}{#2}}

\newtheorem{proposition}{Proposition}
\newtheorem*{proposition*}{Proposition}
\newtheorem*{theorem*}{Theorem}

\title{Risk-Averse Reinforcement Learning with Itakura-Saito Loss}

% The \author macro works with any number of authors. There are two commands
% used to separate the names and addresses of multiple authors: \And and \AND.
%
% Using \And between authors leaves it to LaTeX to determine where to break the
% lines. Using \AND forces a line break at that point. So, if LaTeX puts 3 of 4
% authors names on the first line, and the last on the second line, try using
% \AND instead of \And before the third author name.

\author{%
  Igor Udovichenko\\
  Skolkovo Institute of Science and Technology\\
  Vega Institue Foundation\\
  Moscow, Russia\\
  \texttt{i.udovichenko@skoltech.ru} \\
  \And
  Olivier Croissant\\
  Natixis Foundation\\
  Paris, France\\
  \And
  Anita Toleutaeva\\
  Skolkovo Institute of Science and Technology\\
  Moscow, Russia\\
  \And
  Evgeny Burnaev\\
  Skolkovo Institute of Science and Technology\\
  Artificial Intelligence Research Institute\\
  Moscow, Russia\\
  \And
  Alexander Korotin\\
  Skolkovo Institute of Science and Technology\\
  Artificial Intelligence Research Institute\\
  Moscow, Russia\\
  \texttt{a.korotin@skoltech.ru} \\
}

\begin{document}

\maketitle

% \begin{abstract}
% Risk-averse reinforcement learning finds application in various high-stakes fields.
% Unlike classical reinforcement learning, which aims to maximize expected returns, risk-averse agents choose policies that minimize risk, occasionally sacrificing expected value.
% These preferences can be framed through utility theory.
% \st{We focus on the specific case of the exponential utility function, where we can derive the Bellman equations and employ various reinforcement learning algorithms with few modifications.}
% \red{We focus on the exponential utility function, from which Bellman equations can be derived with few modifications to standard RL.} However, these methods suffer from numerical instability due to the need for exponent computation throughout the process.
% \st{To address this, we introduce a numerically stable and mathematically sound loss function based on the Itakura-Saito divergence for learning state-value and action-value functions.}
% \red{To address this, we introduce a numerically stable and mathematically sound loss function based on the Itakura-Saito (IS) divergence. Our IS-based approach preserves the risk-averse Bellman optimality criterion without incurring scale-dependent blow-ups.}
% We evaluate our proposed loss function against established alternatives, both theoretically and empirically.
% In the experimental section, we explore multiple financial scenarios, some with known analytical solutions, and show that our loss function outperforms the alternatives.
% \end{abstract}

\begin{abstract}
Risk-averse reinforcement learning finds application in various high-stakes fields.
Unlike classical reinforcement learning, which aims to maximize expected returns, risk-averse agents choose policies that minimize risk, occasionally sacrificing expected value.
These preferences can be framed through utility theory.
We focus on the specific case of the exponential utility function, where one can derive the Bellman equations and employ various reinforcement learning algorithms with few modifications.
To address this, we introduce to the broad machine learning community a numerically stable and mathematically sound loss function based on the Itakura-Saito divergence for learning state-value and action-value functions.
We evaluate the Itakura-Saito loss function against established alternatives, both theoretically and empirically.
In the experimental section, we explore multiple scenarios, some with known analytical solutions, and show that the considered loss function outperforms the alternatives.
\end{abstract}

\vspace{-2mm}
\section{Introduction}
\vspace{-2mm}
Reinforcement learning (RL) has achieved remarkable success in domains where the primary goal is to learn policies by interacting with an environment~\cite{li2018deepreinforcementlearning,sutton2018reinforcement}.
The goal is often formalized through a Markov decision process (MDP), which aims to find a policy that maximizes the expected cumulative reward received during the interaction with the environment~\cite{sutton2018reinforcement}.

However, agents must prioritize risk mitigation alongside performance in high-stakes applications such as finance, healthcare, and autonomous systems~\cite{bernoulli1751commentarii, von1947theory, kolm2019modern}.
Traditional risk-neutral RL frameworks, which optimize for expected returns, often fail to account for the variability and tail risks inherent in these settings.
Risk-averse RL addresses this by incorporating preferences that penalize uncertainty, typically formalized through utility theory or coherent risk measures~\cite{follmer2011stochastic}.

Among the various utility-based methods, the \emph{exponential} (or \emph{entropic}) utility function stands out for its convenient properties~\cite{penner2007dynamic, hau2023entropic}.
Yet, existing exponential-utility RL approaches typically require exponentiation of the value function at each step and can suffer from significant numerical instabilities~\cite{enders2024risk}.
These instabilities often prevent reliable convergence.

This paper introduces an approach to risk-averse RL by leveraging the Itakura-Saito (IS) divergence~\cite{itakura1968analysis}, a specific case of Bregman divergence~\cite{bregman1967relaxation, banerjee2005clustering} historically used in signal processing~\cite{chan2008advances} and non-negative matrix factorization~\cite{fevotte2009nonnegative}.
The loss was first introduced in~\cite{murray2022deep} yet remains unexplored in the broad RL community.
In the paper, we derive the loss from the fundamental property of the Bregman divergence and compare it empirically against known alternatives.

\paragraph{Contributions}
\begin{enumerate}
    \item \textbf{Derivation:} We formally derive the IS loss from Bregman divergence and show that it recovers the exponential utility’s Bellman equation under mild conditions, ensuring that the resulting value estimate is correct and that the method is scale-invariant. We also derive the corresponding stochastic approximation rule for the tabular setup.
    \item \textbf{Empirical Validation:} Across a range of benchmarks---from analytically tractable portfolio examples~\ref{sec:exp:toy} to a combinatorial RL problem~\ref{sec:exp:rssac}---the IS loss outperforms existing baselines.
\end{enumerate}

\vspace{-2mm}
\section{Background and Related Works}\label{sec:bg}
\vspace{-2mm}

In this section, we briefly review the essentials of reinforcement learning (RL), emphasizing how risk aversion arises in decision-making processes, and why exponential utility proves useful for risk-sensitive control.

\vspace{-2mm}
\subsection{Markov Decision Process (MDP)}
\vspace{-2mm}
Consider a MDP of the form $(\cS, \cA, r, p, s^0)$, where $\cS$ and $\cA$ are sets of states and actions, respectively.
Here $r(s, a, s')$ is the reward function, dependent on the current state, action, and next state.
State-transition probability (or density) is denoted as $p(s' \mid s, a)$.
The initial state at $t=0$ is $s^0$.
We assume the finite time horizon, so the time index $t=0,\ldots, T$, where $T < \infty$.
The discount factor $\gamma = 1$ for simplicity.
Extending our ideas on the case with $\gamma < 1$ and infinite time horizon is straightforward~\cite{hau2023entropic}.
The \textbf{timestamp is assumed to be a part of the state} to avoid notation overload.
By $\Pi$ we denote the set of Markov policies $\pi(a \mid s)$.
We restrict our considerations to the class of Markov policies, because the optimal policy lies in it~\cite{hau2023entropic}.

We define the trajectory $\cT^\pi$ as a random sequence of states and actions according to a policy $\pi$:
\begin{equation}
    \cT^\pi \defeq \left(s^0, a^0, s^{1}, a^{1}, \ldots, s^T \right), \quad a^t \sim \pi(\cdot \mid s_t),\quad t = 0,\ldots,T-1.
\end{equation}
A trajectory part started at state $s$ is denoted as $\cT^\pi_s$.
Furthermore, we write $\cT^\pi_{s, a}$ if the action at $s$ is also fixed rather than sampled from $\pi$.
We define the random return of a policy $\pi$ as follows:
\begin{equation}
\textcolor{Red}{R^\pi} \big/ \textcolor{Green}{R^\pi(s)} \big/ \textcolor{Blue}{R^\pi(s, a)}
= \sum_{\tau=t}^{T-1} r(s^\tau, a^\tau, s^{\tau+1}),\quad
\left( s^t, a^t,\ldots,s^T \right) \sim \textcolor{Red}{\cT^\pi} \big/ \textcolor{Green}{\cT^\pi_s} \big/ \textcolor{Blue}{\cT^\pi_{s, a}},
\end{equation}
where $t=0$ for $R^\pi$ or $t$ is a timestamp of $s$ for $R^\pi(s)$ and $R^\pi(s, a)$.

The standard goal of RL is to find a policy that maximizes the expected return:
\begin{equation}\label{eq:rl_goal}
\pi^* = \arg \max_{\pi \in \Pi} \E[\cT^\pi]{R^\pi}.
\end{equation}
% \red{We let $R^\pi$ denote the cumulative reward over an episode under policy $\pi$. In risk-neutral RL, $\pi^*$ is chosen to maximize $\mathbb{E}[R^\pi]$. However, such an objective does not capture agents’ aversion to variability or catastrophic losses.}

\vspace{-2mm}
\subsection{Learning Optimal Value Functions}
\vspace{-2mm}
Many RL algorithms rely on the state-value function $V^\pi(s)$ (or simply $V$-function) or action-value function $Q^\pi(s, a)$ ($Q$-function) defined as follows:
\begin{equation}\label{eq:vdef}
V^\pi(s)\defeq \E[\cT^\pi_s]{R^\pi(s)},\quad Q^\pi(s, a) \defeq \E[\cT^\pi_{s,a}]{R^\pi(s, a)},
\end{equation}

$V^\pi(s) = 0$ and $Q^\pi(s, a) = 0$ in all terminal states $s$.
We denote the $V$-function of the optimal policy $\pi^*$ (optimal value function) by $V^*(\cdot)$, and the optimal $Q$-function as $Q^*(\cdot, \cdot)$.
Thanks to the \emph{tower property of the conditional expectation} operator, value and optimal value functions satisfy the famous Bellman equations~\cite{sutton2018reinforcement}:
\begin{align}
    V^\pi(s) &= \E[a, s']{r(s, a, s') + V^\pi(s')}, \label{eq:vv}\tag{VV} \\
    V^*(s) &= \max_{a \in \cA} \E[s']{r(s, a, s') + V^*(s')}, \label{eq:vvopt}\tag{VV*} \\
    Q^\pi(s, a) &= \E[a', s']{r(s, a, s') + Q^\pi(s', a')}, \label{eq:qq}\tag{QQ} \\
    Q^*(s, a) &= \E[s']{r(s, a, s') + \max_{a' \in \cA} Q^*(s', a')}, \label{eq:qqopt}\tag{QQ*}
\end{align}
where the expectation is taken over the variables sampled from the policy or the state-transition law.

% The simplest example is the deep $Q$-learning algorithm.
% It aims to find the optimal $Q$-function in the class of neural network (NN) $Q_\theta$ parametrized by the weights $\theta$.
% The optimization procedure is the following:
% \begin{equation}\label{eq:dqn}
% \cL(\theta) = \E[(s, a, s')]{\frac12 \left(Q_\theta(s, a) - r(s, a, s') - \max_{a'} Q_{\theta^-}(s', a') \right)^2},
% \end{equation}
% where $\theta^-$ denotes the parameters of the target network~\cite{fan2020theoretical}.
% By $\E[(s, a, s')]{\cdot}$ we denote the expectation computed over the $(s, a, s')$-tuples, being collected during the interaction of the agent with the environment.
% Mean squared error (MSE) loss ensures that the minimum of $\cL(\theta)$ is the conditional expectation, the rhs of~\eqref{eq:qqopt}, so the objective indeed learns the optimal $Q$-function~\cite{mnih2013playing, fan2020theoretical}.

Many deep learning algorithms in RL involve learning either the $Q$- or the $V$-function using Bellman equations.
Policy gradient methods often rely on learning the value function for some policy $\pi$~\cite{li2018deepreinforcementlearning}.
Policy evaluation step~\cite{sutton2018reinforcement} aims to find a $V^\pi$.
The $V^\pi_\theta$ is a NN parametrized by a weight vector $\theta$.
The NN is trained by optimizing the MSE objective that regresses $V_\theta^\pi$ on the rhs of~\eqref{eq:vv}.
Define
\begin{equation}\label{eq:delta}
\delta_V(\theta) = V^\pi_\theta(s) - r(s, a, s') -  V^\pi_{\theta^-}(s'),
\end{equation}
the difference between the current approximation of the $V$-function and its target from the corresponding Bellman equation.
By $\theta^-$ we denote the target network's weights.
Then
\begin{equation}\label{eq:vl}\tag{MSE}
\cL^{\text{MSE}}(\theta) = \E[(s, a,s')]{\tfrac12 \delta_V(\theta)^2},
\end{equation}
is optimized when $V^\pi_\theta = V^*$.
It follows from the fundamental property of conditional expectation being the optimal $\cL^2$-predictor, so the minimum is attained when $V^\pi_\theta$ satisfies~\eqref{eq:vv}.
The expectation is taken over tuples $(s, a, s')$ collected during the interaction of an agent an environment.

% The agent interacts with the environment and collects the tuples $(s_i, a_i, s'_i)$.
% For each tuple, the target $y^i$ is computed as
% \[
% y_i = r(s_i, a_i, s'_i) + \max_{a'} Q_{\theta^-}(s', a'),
% \]
% where $\theta^-$ are the target network parameters updated less often than $\theta$.
% After the batch of tuples and targets is collected, the weights $\theta$ are updated using gradient descent:
% \begin{gather*}
% \cL(\theta) = \frac1N \sum_{i=1}^N \frac12 \left( Q_\theta(s_i, a_i) - y_i \right)^2, \\
% \theta \leftarrow \theta - \eta \nabla \cL(\theta),
% \end{gather*}
% where $\eta$ is the learning rate parameter of the stochastic gradient descent.

\vspace{-2mm}
\subsection{Formalizing the Risk Aversion}
\vspace{-2mm}
Consider two alternative returns an agent can choose, one is deterministic zero reward, and the other is either $1$ or $-1$ with equal probabilities.
For objective~\eqref{eq:rl_goal} they are equal, because their expected values are equal, but for some applications the deterministic reward is preferable, because it is ``less risky''.
There are many possible ways to formalize the preferences of random outcomes.
The most straightforward one is through the von Neumann–Morgenstern (VNM) utility theorem.
It states that under 4 VNM-rationality axioms~\cite{von1947theory, follmer2011stochastic}, the utility function can describe the agent's preferences, i.e., random outcome $X$ is preferable to $Y$, if $\E{u(X)} > \E{u(Y)}$.
The utility function is defined up to affine transformations, e.g. $u(\cdot)$ and $a + bu(\cdot)$ describe the same preferences for $a \in \bbR$ and $b > 0$.
A natural assumption, not implied by the VNM theorem, is that $u(\cdot)$ is a strictly increasing function, which can be interpreted as ``there is no such thing as too much money''.
Under this assumption, one can define the \emph{certainty equivalent (CE)} as $u^{-1}\!\left( \E{u(X} \right)$, a non-random reward that is equivalent to a random one from the VNM agent's point of view.

The exponential (also called entropic) utility function $u(x) = \alpha^{-1} (1-e^{-\alpha x})$ represents a significant specific example.
Coefficient $\alpha > 0$ defines the agent's \emph{risk aversion}.
In some applications, one can also consider the case $\alpha < 0$, in which the agent is said to be \emph{risk seeking}.
If $\alpha \rightarrow 0$, the agent becomes indifferent to risk and treats outcomes with the same expected values as equal in the limit.
As $\alpha \rightarrow \infty$, the agent treats all positive returns equally regardless of their magnitude and does not tolerate any losses.

The certainty equivalent (CE) for a random variable $X$ is defined as the guaranteed amount that an agent would accept instead of taking a risk. For exponential utility, this is expressed as
\begin{equation}
\EXD{X} \defeq -\alpha^{-1} \log \E{e^{-\alpha X}},
\end{equation}
where $\alpha>0$  is the risk aversion parameter.
Operator $\EXD{\cdot}$ shares many properties with expectation, hence the notation.
The key ones are~\cite{follmer2011stochastic}:
\begin{enumerate}[label=P.\,\arabic*,ref=P.\,\arabic*]
    \item\label{p:n} \emph{Normalization:} $\EXD{0} = 0$.
    \item\label{p:m} \emph{Monotonicity:} If $X \le Y\ \text{a.s.}$, then $\EXD{X} \le \EXD{Y}$.
    \item\label{p:tr} \emph{Translation invariance:} $\EXD{X + c} = \EXD{X} + c,\ c \in \bbR$.
    \item\label{p:tq} \emph{Tower property:} $\EXD{\EXD{Y \mid X}} = \EXD{Y}$
\end{enumerate}
Unlike the expectation, $\EXD{\cdot}$ is not linear, but concave, which is a weaker property:
\begin{enumerate}[resume,label=P.\,\arabic*,ref=P.\,\arabic*]
    \item\label{p:c} \emph{Concavity:} $\EXD{\lambda X + (1 - \lambda) Y} \ge \lambda \EXD{X} + (1 - \lambda) \EXD{Y},\ \lambda \in [0, 1]$.
\end{enumerate}
These unique properties allow us to derive Bellman equations~\cite{hau2023entropic} for the exponential utility similar to those widely used to solve risk-neutral MDPs.

\vspace{-2mm}
\subsection{Entropic MDP and its limitations}
\vspace{-2mm}
Risk-averse MDP aims to maximize the anticipated future return adjusted for unwillingness to bear excess risks.
Due to our focus on the exponential utility, we formalize the objective as follows:
\begin{equation}
    \pi^* = \arg \max_{\pi \in \Pi} \EXD[\cT^\pi]{R^\pi}.
\end{equation}
This objective is analogous to~\eqref{eq:rl_goal}, but the expectation operator $\E{\cdot}$ is replaced with the CE operator $\EXD{\cdot}$ of exponential utility.
We can define the value functions analogously to~\eqref{eq:vdef}:
\begin{equation}\label{eq:evdef}
\tilde V^\pi(s) \defeq \EXD{R^\pi(s)},\quad \tilde Q^\pi(s, a) \defeq \EXD{R^\pi(s, a)},
\end{equation}
The Bellman equations become~\cite{hau2023entropic}:
\begin{align}
    \tilde V^\pi(s) &= \EXD[a, s']{r(s, a, s') + \tilde V^\pi(s')}, \label{eq:evv}\tag{EVV} \\
    \tilde V^*(s) &= \max_{a \in \cA} \EXD[s']{r(s, a, s') + \tilde V^*(s')}, \label{eq:evvopt}\tag{EVV*} \\
    \tilde Q^\pi(s, a) &= \EXD[a', s']{r(s, a, s') + \tilde Q^\pi(s', a')}, \label{eq:eqq}\tag{EQQ} \\
    \tilde Q^*(s, a) &= \EXD[s']{r(s, a, s') + \max_{a' \in \cA} \tilde Q^*(s', a')}. \label{eq:eqqopt}\tag{EQQ*}
\end{align}

The seminal work on risk-sensitive MDP considered exponential utility~\cite{howard1972risk}.
Recently, works~\cite{borkar2001sensitivity, borkar2002q, borkar2002risk, mihatsch2002risk, nass2019entropic, fei2020risk, deletang2021model, fei2021risk, fei2021exponential, fei2022cascaded, moharrami2025policy, noorani2023exponential, hau2023entropic, enders2024risk, granadosrisk} also considered MDPs with exponential utility specifically.
% Also, the importance of exponential utility can be highlighted by the fact that its optimization appears as a dual problem in the martingale Schr{\"o}dinger bridge problem~\cite{henry2019martingale, nutz2023martingale}.
Many of these methods rely on learning the optimal $Q$- or $V$-function.
The value function is often auxiliary in RL algorithms, since the ultimate goal is to learn \emph{policy}.
However, in some applications, learning the precise value function is critical.
For example, in finance, it represents the portfolio value or the price of the derivative being hedged~\cite{deep_hedging, buehler2021deep, buehler2022deep, murray2022deep, kolm2019dynamic, halperin2020qlbs}.

Since the CE operator replaces the expectation, note that objective~\eqref{eq:vl} does not learn the correct value function for entropic MDP.
The majority of the works mentioned above rely on the following objective, which we call exponential MSE loss:
\begin{equation}\label{eq:exp_mse}\tag{EMSE}
\cL^{\text{EMSE}}(\theta) = \E[(s,a,s')]{\frac12 \alpha^{-2} \Bigl(
    \exp \bigl\{-\alpha \tilde V^\pi_\theta(s)\bigr\} -
    \exp \bigl\{ -\alpha r(s, a, s') - \alpha \tilde V^\pi_{\theta^-}(s') \bigr\}
\Bigr)^2}.
\end{equation}
The optimizer of this loss is a correct value function for the risk-averse MDP.
By the Taylor expansion~\eqref{eq:exp_mse} can be rewritten as:
\begin{equation}\label{eq:emse_taylor}
\cL^\text{EMSE} = \tfrac12 \exp \bigl\{-2 \alpha \tilde V^\pi_\theta(s)\bigr\} \E[a, s']{ \delta_{\tilde V}(\theta)^2 + o\bigl( \delta_{\tilde V}(\theta)^2 \bigr)},
\end{equation}
so, the objective~\eqref{eq:exp_mse} reduces to MSE loss for risk-tolerant agents or small error $\delta_{\tilde V}(\theta)$.
Note that it depends on $\theta$ not only through the $\delta_{\tilde V}(\theta)$ because of the factor $\exp \bigl\{-2 \alpha \tilde V^\pi_\theta(s)\bigr\}$.
We argue that such dependence is highly undesirable.
First, the loss vanishes for high positive values of $\tilde V^\pi_\theta(s)$ and explodes for high negative values.
Works~\cite{granadosrisk, enders2024risk} note its numerical instability.
Second, from the translation invariance property~\ref{p:tr} of $\EXD{\cdot}$, the learning of $\tilde V^\pi_\theta$ should not depend on the absolute levels of its values.

Another loss was proposed in~\cite{deletang2021model}, which we call \emph{softplus} because of the term $\log\bigl(1 + \exp \{\alpha z\}\bigr)$:
\begin{equation}\label{eq:soft_plus}\tag{SP}
    \cL^\text{SP}(\theta) =
    2 \delta_{\tilde V}(\theta)\alpha^{-1} \log \bigl(1 + \exp \bigl\{\alpha \delta_{\tilde V}(\theta)\bigr\} \bigr)
    +2 \alpha^{-2} \mathrm{li}_2 \bigl(-\exp \bigl\{\alpha \delta_{\tilde V}(\theta)\bigr\} \bigr)
    + \pi^2 \!\big/ (6 \alpha^2),
\end{equation}
where $\mathrm{li}_2(z)$ is Spence's dilogarithm function $\mathrm{li}_2(z) = -\int_0^z \frac{\log(1 - z)}z\,dz$.
It appears as an objective, whose gradient coincides with the heuristic stochastic approximation rule, introduced in~\cite{mihatsch2002risk}.
This loss depends on $\theta$ only through the $\delta_{\tilde V}(\theta)$, which makes it numerically more stable than~\eqref{eq:soft_plus}.
However, it is not convex and only learns the correct value function when the target has a Gaussian distribution.

In summary, the known losses have the following limitations:
\begin{itemize}
    \item \ref{eq:exp_mse} is numerically unstable,
    \item \ref{eq:soft_plus} is optimized by the value function only in a specific case.
\end{itemize}
In the following section, we consider the objective that addresses these limitations.

\vspace{-2mm}
\section{Itakura-Saito Loss for Learning Risk-Averse Value Function}\label{sec:method}
\vspace{-2mm}

\begin{figure}[t]
    \centering
    \includegraphics[width=\linewidth]{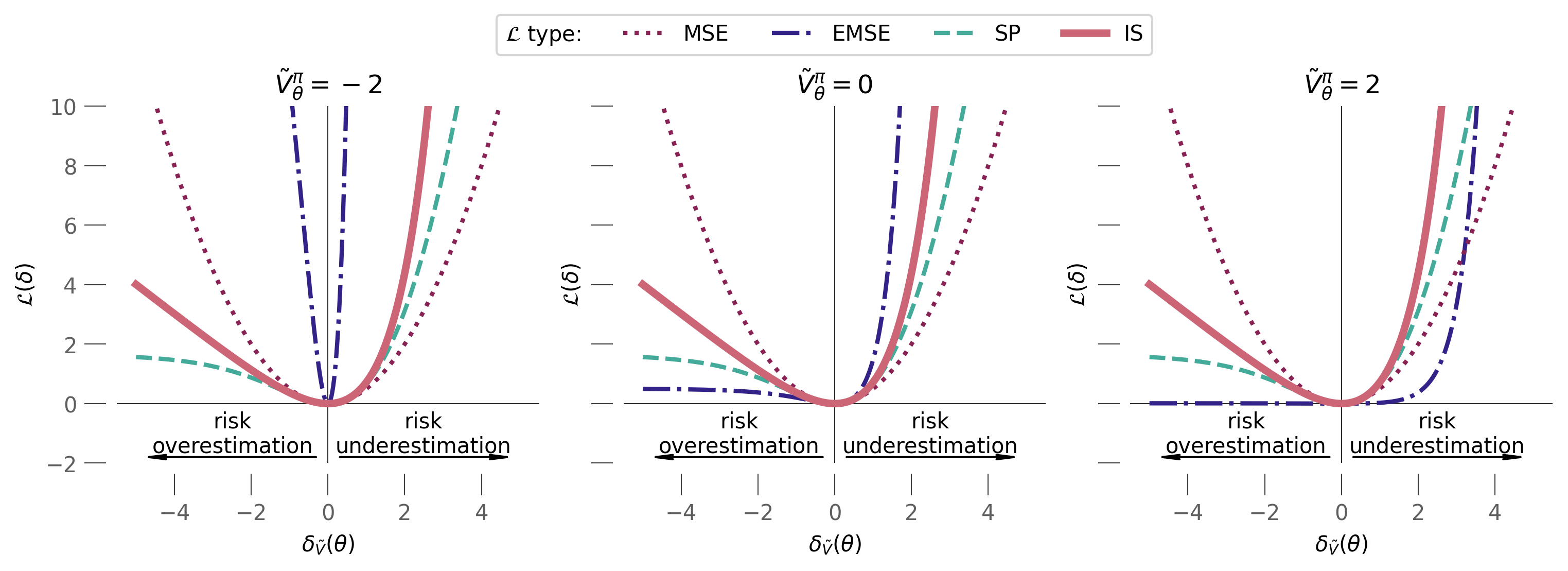}
    \caption{%
        % Losses comparison for $\alpha = 1$.
        % The CE of return is overestimated (and risk is underestimated) by $\tilde V^\pi_\theta$ where $\delta_{\tilde V}(\theta) > 0$, so risk-averse losses (all except MSE) penalize risk underestimation more than overestimation.
        % EMSE depends on the absolute level of $\tilde V^\pi_\theta$.
        % It causes numerical instability.
        Comparison of loss penalties for a one-step value prediction error $\delta_{\tilde V}(\theta)$ when $\alpha=1$.
        A positive $\delta_{\tilde V}(\theta) > 0$ means the current estimate $\tilde V_\theta(s)$ underestimates the true CE value (the return is higher than expected).
        Risk-averse losses heavily penalize underestimation ($\delta_{\tilde V}(\theta) > 0$) since underestimating the value implies unaccounted risk, whereas overestimation ($\delta_{\tilde V}(\theta)< 0 $) is penalized less.
        MSE, being risk-neutral, is symmetric.
        EMSE (exponential MSE) grows with the absolute value of $V$, leading to numerical instability for large values.
    }
    \label{fig:losses}
\end{figure}

In this section, we consider an objective for learning the value function that is \textbf{mathematically correct and numerically stable}.
While MSE loss minimizes the expectation, there are other objectives with similar properties, especially for risk-sensitive settings.

\vspace{-2mm}
\subsection{Bregman Divergence and Itakura-Saito Loss}
\vspace{-2mm}

Recall the definition of Bregman divergence (BD)~\cite{bregman1967relaxation}:
\begin{equation}\label{eq:breg}
    d_\varphi(x, y) = \varphi(x) - \varphi(y) - \bigl< x - y, \nabla \varphi(y) \bigr>,
\end{equation}
where $\varphi(\cdot)$ is a differentiable convex function.
It measures the discrepancy induced by a convex function $\varphi$;
The important property is that the true mean minimizes the expected divergence:
\begin{equation}
\E{X} = \arg \min_y \E{d_\varphi(X, y)}.
\end{equation}
In other words, the expectation is the ``best prediction'' under any Bregman loss, a generalization of the fact that the mean minimizes MSE.
Moreover, BD is an exhaustive class of loss functions for which the expectation is the optimizer~\cite{banerjee2005optimality}.
BD with $\varphi(z) = \frac12 \|z\|^2$ reduces to the MSE loss.

BD with $\varphi(z) = -\log z$ is known as \emph{Itakura-Saito (IS) distance}~\cite{itakura1968analysis} $d_\text{IS}(x, y) = x/y - \log(x/y) - 1$.
It is widely used in audio processing~\cite{chan2008advances} and non-negative matrix factorization~\cite{fevotte2009nonnegative}.
Substituting the prediction $\exp \bigl\{-\alpha \tilde V^\pi_\theta(s)\bigr\}$ and the regression target $\exp \bigl\{ -\alpha r(s, a, s') - \alpha \tilde V^\pi_{\theta^-}(s') \bigr\}$ into $d_\text{IS}(\cdot, \cdot)$ yields the following \textbf{Itakura-Saito loss}~\cite{murray2022deep}:
\begin{equation}\label{eq:is_loss}\tag{IS}
    \cL^\text{IS}(\theta) \stackrel{\text{def}}{=} \alpha^{-2} \E[(s, a, s')]{\exp\bigl\{ \alpha \delta_{\tilde V}(\theta) \bigr\} - \alpha \delta_{\tilde V}(\theta) - 1}.
\end{equation}
In Appendix~\ref{ap:proof} we \underline{formally state and prove} the following proposition.
\begin{proposition}\label{prop:is}
Under mild assumptions the value function that minimizes~\eqref{eq:is_loss} satisfies~\eqref{eq:evv}.
\end{proposition}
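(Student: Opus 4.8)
The plan is to recognize~\eqref{eq:is_loss} as an expected Bregman divergence, invoke the fact (recalled after~\eqref{eq:breg}) that the conditional expectation is the unique minimizer of such a divergence, and then push the resulting fixed-point equation through the map $z \mapsto -\alpha^{-1}\log z$ to recover~\eqref{eq:evv}.

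First I would record the algebraic identity behind the construction. Writing $P \defeq \exp\{-\alpha \tilde V^\pi_\theta(s)\}$ for the prediction and $T \defeq \exp\{-\alpha r(s,a,s') - \alpha \tilde V^\pi_{\theta^-}(s')\}$ for the regression target, one has $\exp\{\alpha \delta_{\tilde V}(\theta)\} = T/P$, so the integrand of~\eqref{eq:is_loss} equals $T/P - \log(T/P) - 1 = d_\text{IS}(T, P)$ with $\varphi(z) = -\log z$. Hence $\cL^\text{IS}(\theta) = \alpha^{-2}\,\E[(s,a,s')]{d_\text{IS}(T, P)}$, with the random target $T$ in the ``data'' slot and the prediction $P$ in the ``variable'' slot.

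Next I would state the mild assumptions and do the per-state reduction. Assume (i) the model class can represent an arbitrary function of $s$ (tabular setting, or a universal approximator in the idealized analysis), so the value at distinct states is optimized independently; (ii) the sampling law of $(s,a,s')$ has full support over the reachable states, so the outer expectation factors as $\sum_s \rho(s)\,\E[a,s' \mid s]{\cdot}$ with $\rho(s) > 0$; (iii) $\E[a,s'\mid s]{\exp\{-\alpha r - \alpha \tilde V^\pi(s')\}} < \infty$ for every reachable $s$ (automatic for bounded rewards on the finite horizon), so that $\EXD{\cdot}$ is well defined. Under (i)--(ii) the minimization of $\cL^\text{IS}$ decouples into, for each reachable $s$, minimizing $\E[a,s'\mid s]{d_\text{IS}(T, P)}$ over the single positive scalar $P = \exp\{-\alpha \tilde V^\pi_\theta(s)\}$, where $T$ is random with the conditional law of $(a,s')$ given $s$. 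The Bregman minimization property then gives the unique minimizer $P^\star = \E[a,s' \mid s]{T}$, uniqueness coming from strict convexity of $\varphi(z) = -\log z$. Taking $-\alpha^{-1}\log$ of $\exp\{-\alpha \tilde V^\pi_\theta(s)\} = \E[a,s'\mid s]{\exp\{-\alpha r - \alpha \tilde V^\pi_{\theta^-}(s')\}}$ yields $\tilde V^\pi_\theta(s) = \EXD[a,s']{r(s,a,s') + \tilde V^\pi_{\theta^-}(s')}$, which is~\eqref{eq:evv} once the target network is synchronized ($\theta^- = \theta$).

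The main obstacle is that ``the value function that minimizes~\eqref{eq:is_loss}'' is only well posed once the right-hand-side target is pinned down, since that target is itself built from a (stale) copy of the value function. I would handle this exactly as with ordinary Bellman arguments: either (a) set $\theta^- = \theta$ and read the stationarity condition above as a fixed-point equation, whose unique solution on the finite horizon follows by backward induction in the timestamp component of $s$ --- terminal states have $\tilde V^\pi = 0$, and each earlier state is pinned down by its already-determined successors --- so the fixed point is precisely $\tilde V^\pi$ of~\eqref{eq:evdef}; or (b) phrase the claim in policy-evaluation form: if the target uses the true $\tilde V^\pi$ on successor states, then $\tilde V^\pi$ is the unique minimizer. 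A secondary caveat is assumption (ii): outside the support of the data distribution the loss imposes no constraint, so the conclusion is asserted only for visited states. Beyond this bookkeeping no heavy computation is required; the substance is entirely in the $d_\text{IS}$ identity and the Bregman minimization property.
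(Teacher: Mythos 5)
Your proposal is correct and follows essentially the same route as the paper's proof: both recognize \eqref{eq:is_loss} as an expected Itakura--Saito (Bregman, $\varphi(z)=-\log z$) divergence between the target $\exp\{-\alpha r - \alpha \tilde V^\pi_{\theta^-}(s')\}$ and the prediction $\exp\{-\alpha \tilde V^\pi_\theta(s)\}$, invoke the fact that the conditional expectation minimizes an expected Bregman divergence (the paper cites Theorem~1 of Tan for this, under the integrability assumption you also state), and then apply $-\alpha^{-1}\log(\cdot)$ to recover \eqref{eq:evv}. The only difference is one of bookkeeping: you make explicit the realizability/full-support conditions needed for the per-state decoupling and the backward-induction resolution of the target-network fixed point, which the paper absorbs into its assumptions that $\theta^-$ is a stop-gradient copy of $\theta$ and that the minimum is attained.
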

First, note that~\eqref{eq:is_loss} depends on $\theta$ only through $\delta_{\tilde V}(\theta)$.
Second, by the Taylor expansion:
\begin{equation}
\cL^\text{IS} = \E[(s, a, s')]{\tfrac12 \delta_{\tilde V}(\theta)^2 + o\left( \delta_{\tilde V}(\theta)^2 \right)},
\end{equation}
so, for a risk-tolerant agent or small discrepancies between the $V$-function and its target, the Itakura-Saito loss reduces to the MSE loss.
We compare visually all losses in Figure~\ref{fig:losses}.
Notably, IS loss casts the risk-sensitive Bellman criterion into a form suitable for stochastic gradient descent---circumventing the bias issues identified in past risk-sensitive Q-learning attempts~\cite{mihatsch2002risk}.

\vspace{-2mm}
\subsection{Stochastic Approximation Rule}
\vspace{-2mm}

Tabular RL algorithms~\cite{sutton2018reinforcement} rely on the stochastic approximation (SA)~\cite{robbins1951stochastic} procedure rather than optimizing some loss function.
The procedures are equivalent in the following sense:
\[
\theta^* = \arg \min_\theta \cL(\theta) \iff \nabla \cL(\theta^*) = 0.
\]
Solving the latter problem with SA is equivalent to solving the former with stochastic gradient descent.
Taking the gradient of~\eqref{eq:is_loss} we derive the following stochastic approximation scheme:
\begin{equation}\label{eq:st_app}
    \tilde V_{k+1}^\pi(s) \leftarrow \tilde V_k^\pi(s) - \eta_k \alpha^{-1} \left(\exp\bigl\{ \alpha \delta_{\tilde V}(k) \bigr\} - 1\right),
\end{equation}
where $\eta_k$ is the learning rate, $\tilde V^\pi_k(\cdot)$ is the approximation of $\tilde V^\pi(\cdot)$ on the $k$-th step of the stochastic approximation algorithm, and $\delta_{\tilde V}(k)$ is the difference between $\tilde V_k(s)$ and its target.

\vspace{-2mm}
\section{Experiments}\label{sec:exp}
\vspace{-2mm}

In this section, we empirically compare the~\eqref{eq:is_loss} loss against~\eqref{eq:exp_mse} and~\eqref{eq:soft_plus}.
We choose the financial problems as our primary experimental setups for the following reasons:
\begin{enumerate}
    \item The very concept of risk aversion originates in economics and finance~\cite{bernoulli1751commentarii, von1947theory, follmer2011stochastic}.
    \item RL is widely considered in financial literature~\cite{kolm2019modern, hambly2023recent}.
    \item The proposed setups admit ground truth \underline{analytical solutions} (see Appendix~\ref{ap:sol}) or theoretical references, which allow us to highlight the difference between the losses.
\end{enumerate}
Also, we compared all losses in a more complex setup considered in~\cite{enders2024risk}, where the authors propose to use risk-averse RL to increase the robustness of the learned policy against distribution shifts.
We disclose all \underline{technical details} in Appendix~\ref{ap:details}.

\vspace{-2mm}
\subsection{Portfolio Optimization and Hedging}\label{seq:exp:fin}
\vspace{-2mm}
Consider the problem of optimal stock trading in several setups.
The state space is represented by the stock price augmented with timestamp: $s = (t, S_t) \in \{0,\ldots,T\} \times \bbR$.
Each time, an agent can buy or sell any amount of stock, so the action space $\cA = \bbR$ is continuous.
We consider the discrete-time Bachelier model~\cite{bachelier1900theorie} of stock price dynamics%
% ~\footnote{Although the model may seem too simplistic compared to those used in practice~\cite{gatheral2011volatility}, it allows us to evaluate our method \emph{quantitatively}. The extension to a more complex model falls beyond the scope of the current work.}%
, so the price increments are independent and normally distributed.
Let $Z_t \sim \cN(\mu, \sigma^2),\ t=1,\ldots,T$ be the iid Gaussian variables with mean $\mu$ and variance $\sigma^2$.
The state transition law is: $s' = (t+1,\ S_t + Z_{t+1}),\ t=0,\ldots,T-1$, so the state transition law is independent of the action taken.
The initial stock price is $S_0$, so $s^0 = (0, S_0)$.
The reward function is specified differently for each setup.

We parametrize $\pi_\phi(s)$ and $\tilde V^{\pi}_\theta(s)$ as multi-layer perceptrons.
We use the TD(0) learning with function approximation~\cite{sutton2018reinforcement} to learn the $V$-functions.
Authors in~\cite{hau2023entropic} prove that the optimal policy is deterministic in this case, so the action is a non-random output of $\pi_\phi(s)$.
To learn the optimal policy, we minimize the following objective:
\begin{equation}
    \cL(\phi) = \E[s']{\alpha^{-1} \exp\bigl\{ -\alpha \bigl( r(s, \pi_\phi(s), s') + \tilde V_{\theta}^{\pi}(s') - \tilde V_{\theta}^{\pi}(s) \bigr) \bigr\}},
\end{equation}
which estimates the gradient of $\alpha^{-1} \exp\bigl\{-\alpha \left(\tilde Q^\pi\bigl(s, \pi_\phi(s)\bigr) - \tilde V^\pi_{\theta}(s) \right)\bigr\}$ using one TD(0) sample.
It learns the correct policy, since the function $\alpha^{-1} \exp\{-\alpha x\}$ is monotonically decreasing.

\begin{figure}[t]
    \vspace{-5mm}
    \centering
    \begin{subfigure}[b]{0.48\textwidth}
        \centering
        \includegraphics[width=\linewidth]{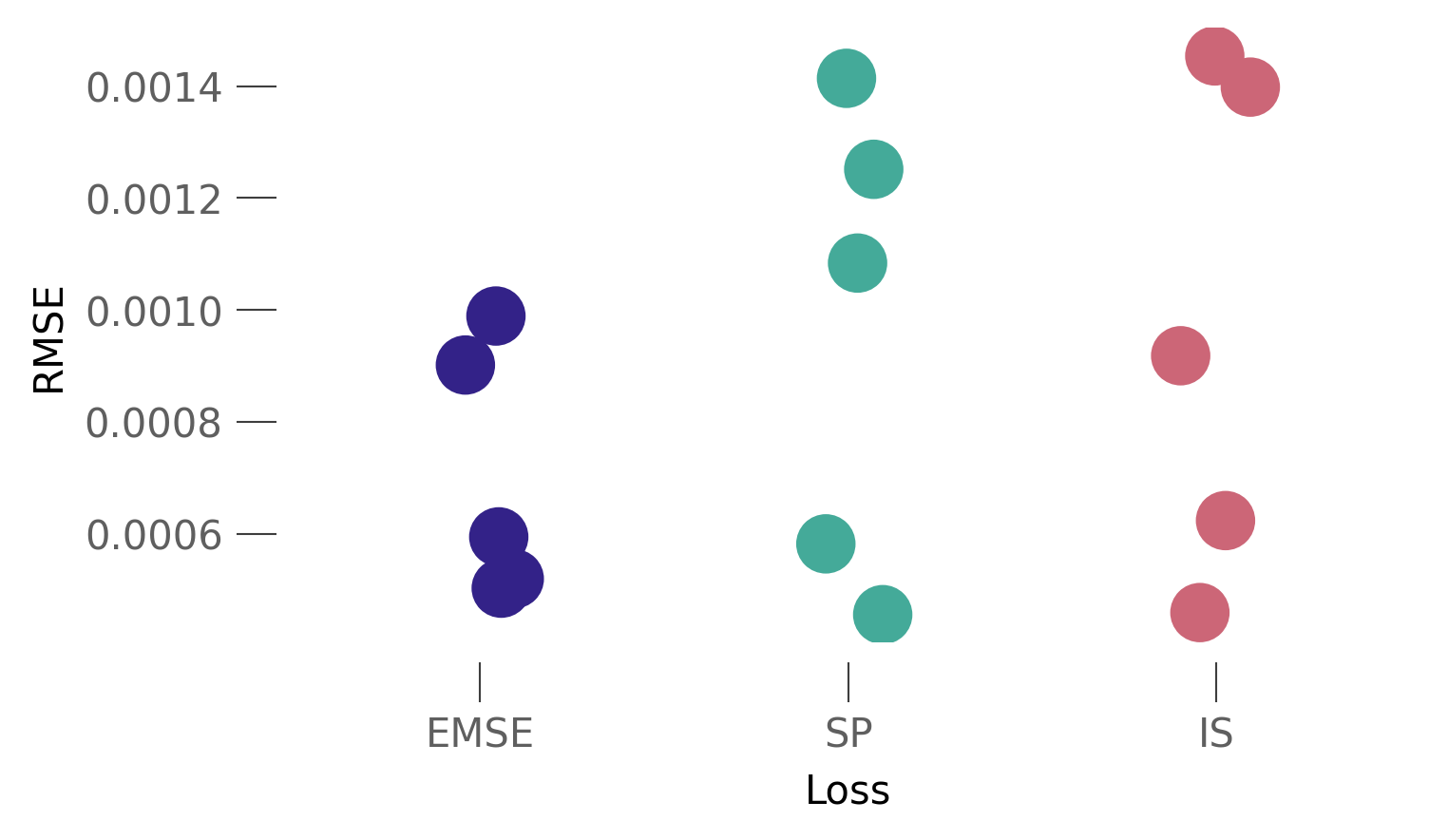}
        \caption{Gaussian reward}
        \label{fig:gauss}
    \end{subfigure}
    \hfill
    \begin{subfigure}[b]{0.48\textwidth}
        \centering
        \includegraphics[width=\linewidth]{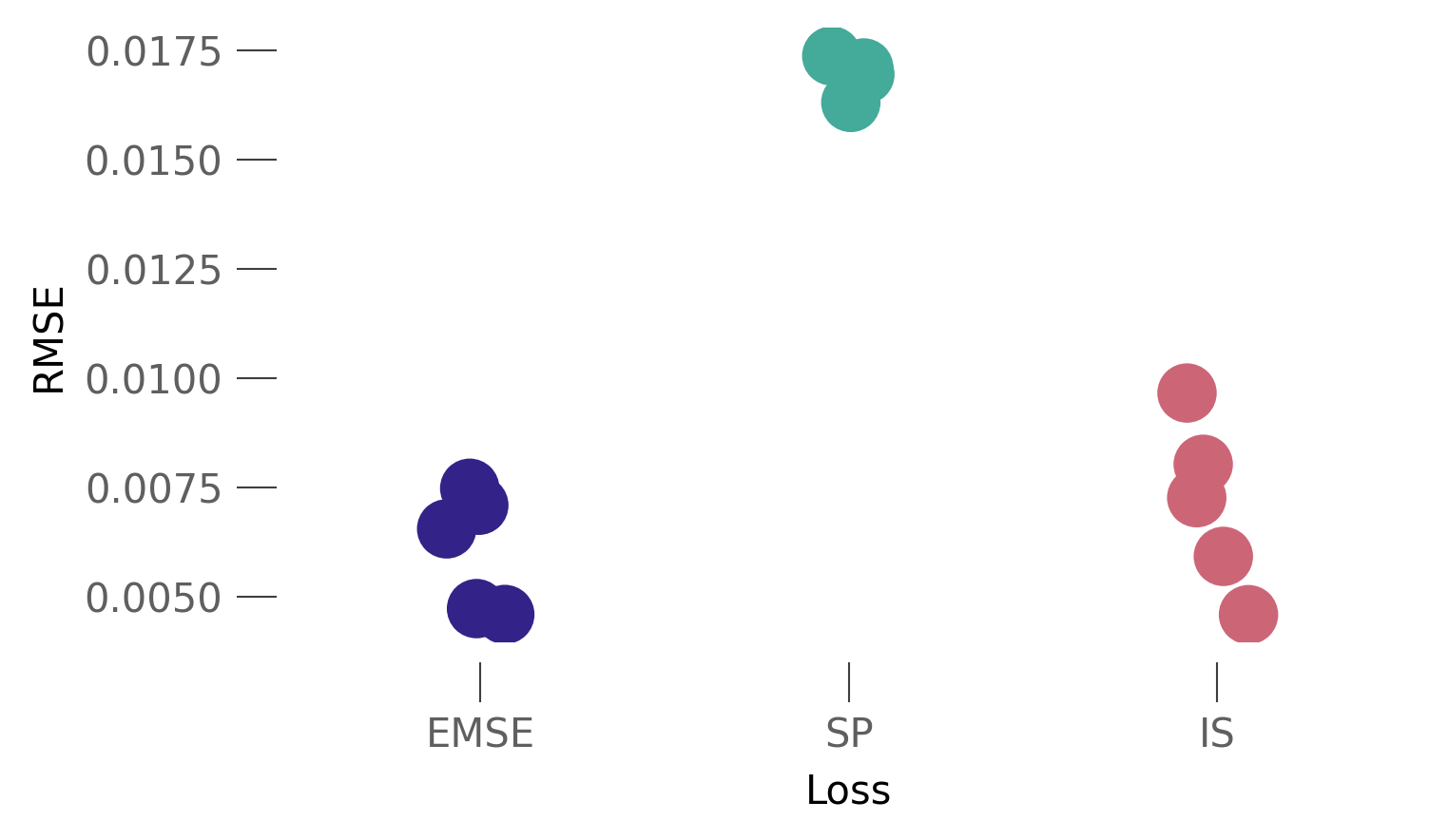}
        \caption{Quadratic reward}
        \label{fig:quad}
    \end{subfigure}
    \caption{%
        Error in learning the obtained approximation of $V^*$ in the Gaussian and quadratic cases.
        Each experiment was run five times with different random seeds.
        In the Gaussian case, losses perform on par.
        Loss~\eqref {eq:soft_plus} does not learn the correct value function for the non-Gaussian return.
    }
    \label{fig:toy}
\end{figure}

\paragraph{Analytically Tractable Cases}\label{sec:exp:toy}

In the first experiment we set $\mu > 0$ and $r(s, a, s') = a(S_{t+1} - S_t),\ t=0,\ldots,T-1$, so the rewards come from stock trading solely.
Return $R^\pi(s^0) = \sum_{t=0}^{T-1} a^t Z_{t+1}$ is distributed normally as a sum of Gaussian random variables, so the application of objective~\eqref{eq:soft_plus} is mathematically sound here.
The optimal $V$-function and optimal policy can be derived analytically (see Appendix~\ref{ap:sol}):
\begin{equation}
\pi^*(s) = \frac{\mu}{\alpha \sigma^2}, \quad V^*(s) = \frac{\mu^2 (T - t)}{2\alpha \sigma^2}.
\end{equation}

Next, we set $\mu = 0$ and we consider the reward function of the form:
\begin{equation}
    r(s, a, s') =
    \begin{cases}
        a(S_{t+1} - S_t),   & t=0,\ldots,T -2, \\
        a(S_T - S_{T-1}) + g(S_T), \quad g(x) = \frac12 (x - S_0)^2 & t=T-1.
    \end{cases}
\end{equation}
It is similar to the previous one, except the agent receives a quadratic reward at the last moment.
The return is not Gaussian anymore, so the loss~\eqref{eq:soft_plus} learns the incorrect value function.
An analytical solution exists in this case:
\begin{equation}
    \pi^*(s) = \alpha (S_t - S_0), \quad
    V^*(s) = -\frac{-\alpha(S_t - S_0)^2 + (T-t) \log\left(1 - \alpha \sigma^2\right)}{2\alpha}.
\end{equation}

We compare the~\eqref{eq:is_loss} loss with the alternatives in Figure~\ref{fig:toy}.
We use RMSE between the learned value function and the analytical solution $\text{RMSE} = \sqrt{\E[s]{(V^*(s) - V^\pi_\theta(s))^2}}$.
Note that the expectation does not depend on $a$, because the state-transition law is independent of $a$.

\paragraph{Deep Hedging}

\begin{figure}[t]
    % \vspace{-5mm}
    \centering
    \begin{subfigure}[b]{0.48\textwidth}
        \centering
        \includegraphics[width=\linewidth]{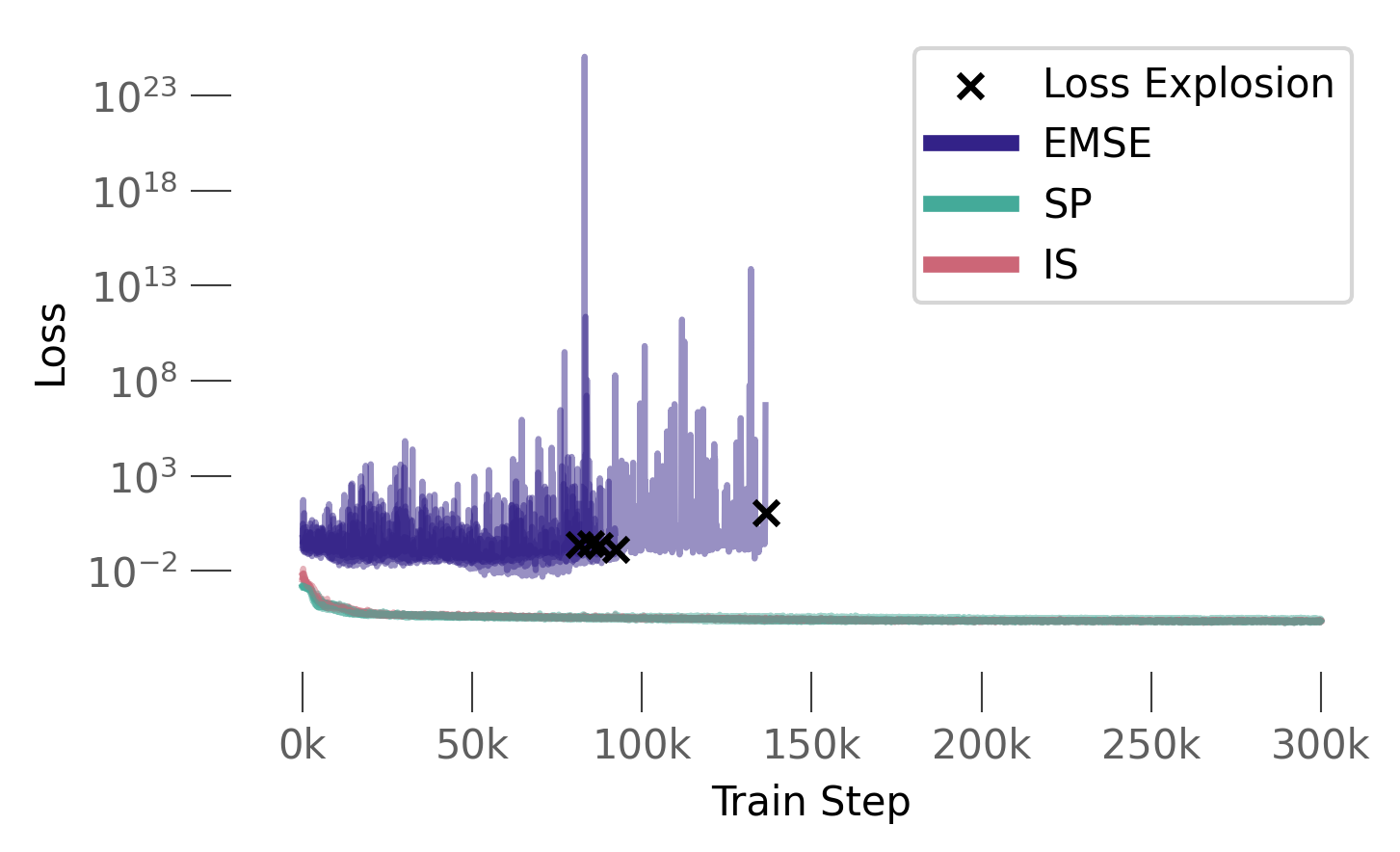}
        \caption{%
            Training process with $\alpha=10$.
            We depict the loss value during training for five random seeds for each loss.
            Objectives~\eqref{eq:soft_plus} and~\eqref{eq:is_loss} converge successfully, while all runs with~\eqref{eq:exp_mse} failed.
        }
        \label{fig:explision}
    \end{subfigure}
    \hfill
    \begin{subfigure}[b]{0.48\textwidth}
        \centering
        \includegraphics[width=\linewidth]{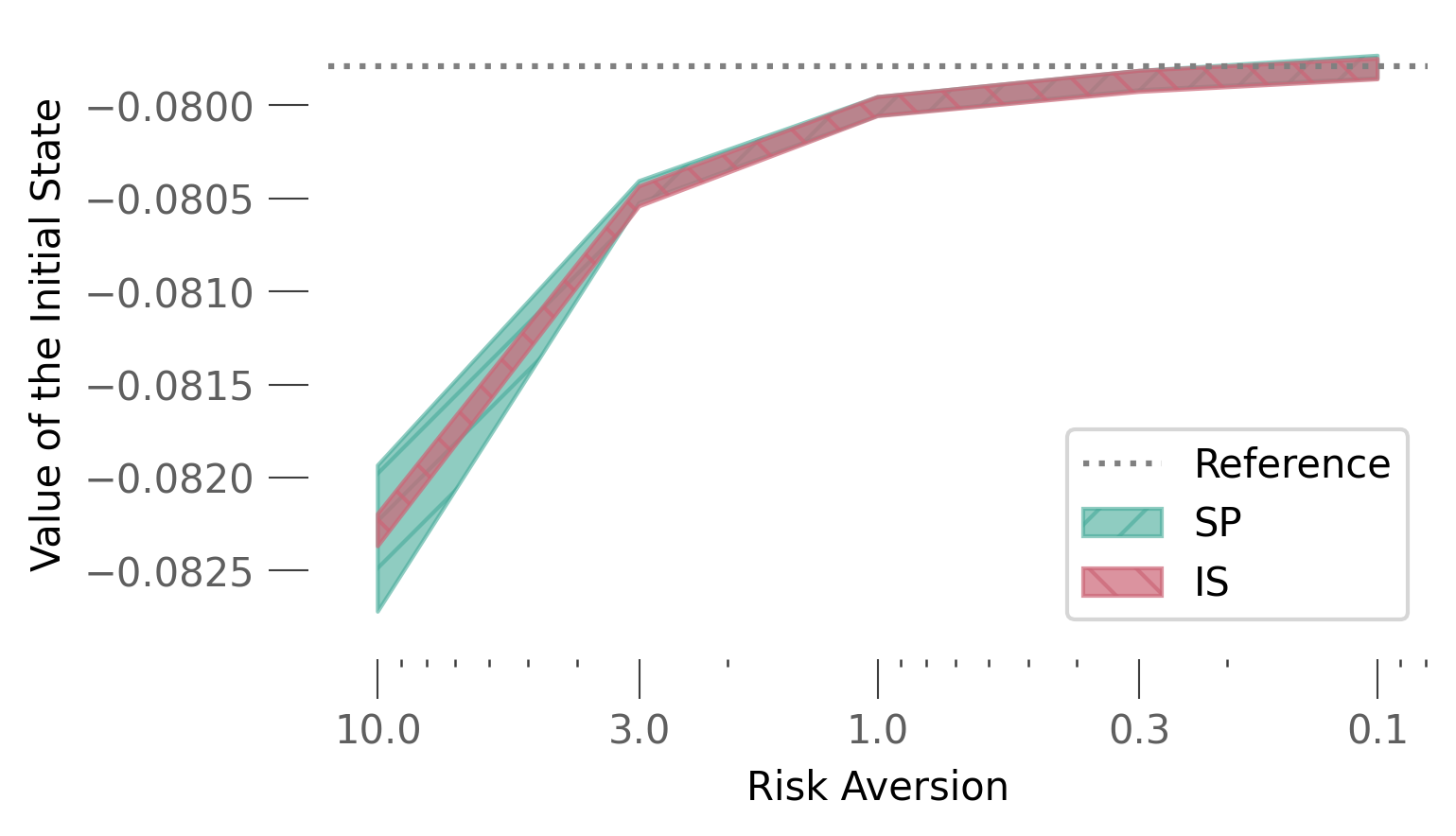}
        \caption{%
            We run~\eqref{eq:soft_plus} and~\eqref{eq:is_loss} five times for each value of risk aversion $\alpha$.
            The filling covers the area $\pm$ 1 standard deviation around the mean value.
            Although losses converge to the theoretical risk-neutral reference, \eqref{eq:is_loss} is more stable for large values of $\alpha$ than~\eqref{eq:soft_plus}.
        }
        \label{fig:convergence}
    \end{subfigure}
    \caption{%
        Loss performance on the Deep Hedging problem~\cite{deep_hedging}.
        Loss~\eqref{eq:is_loss} shows more stable and reliable convergence than the alternatives.
    }
    \label{fig:deep_hedging}
\end{figure}

The European call option is the simplest non-linear derivative contract, which has the payoff of the form $h(x) = \max\{x - K, 0\}$, where $x$ is the stock price at a pre-determined moment and $K$ is the contract parameter, called the strike price~\cite{follmer2011stochastic}.
We consider the following reward function:
\begin{equation}
    r(s, a, s') =
    \begin{cases}
        a(S_{t+1} - S_t),   & t=0,\ldots,T -2, \\
        a(S_T - S_{T-1}) - h(S_T),\quad h(x) = \max\{x - K, 0\} & t=T-1.
    \end{cases}
\end{equation}
Similar problems are widely considered in the financial literature~\cite{deep_hedging, murray2022deep, cao2021deep, kolm2019dynamic, stoiljkovic2023applying}.
The main goal is to calculate the price of the derivative, $\tilde V^*\!\left(s^0\right)$ in our notations.
In our case, the closed-form solution is available only for the risk-neutral case ($\alpha = 0$): $V^*\!\left(s^0\right) = \sigma \sqrt{T / 2\pi}$~\cite{bachelier1900theorie, choi2022black}.
Interestingly, in the risk-neutral case, every policy for which the expected return is well defined mathematically is optimal, because the price process is a martingale.

We show the results in Figure~\ref{fig:deep_hedging}.
First, Figure~\ref{fig:explision} supports our speculations~\eqref{eq:emse_taylor} about the unstable nature of~\eqref{eq:exp_mse} loss.
Second, Figure~\ref{fig:convergence} shows that~\eqref{eq:soft_plus} and~\eqref{eq:is_loss} succeeded in converging to the theoretical risk-neutral reference.
However, the~\eqref{eq:soft_plus} loss shows higher variance across random seeds than~\eqref{eq:is_loss} loss.
We speculate that the primary cause is the non-convex nature of~\eqref{eq:soft_plus}, so the optimization procedure can get stuck in a local minimum.
Also, the return is not Gaussian in this case (although close), so the usage of~\eqref{eq:soft_plus} is not justified.

\vspace{-2mm}
\subsection{Risk-Averse Soft Actor-Critic (RSSAC) for Robust Combinatorial Optimization}\label{sec:exp:rssac}
\vspace{-2mm}

\begin{figure}[t]
    \vspace{-1cm}
    \centering
    \begin{subfigure}[b]{0.48\textwidth}
        \centering
        \includegraphics[width=\linewidth]{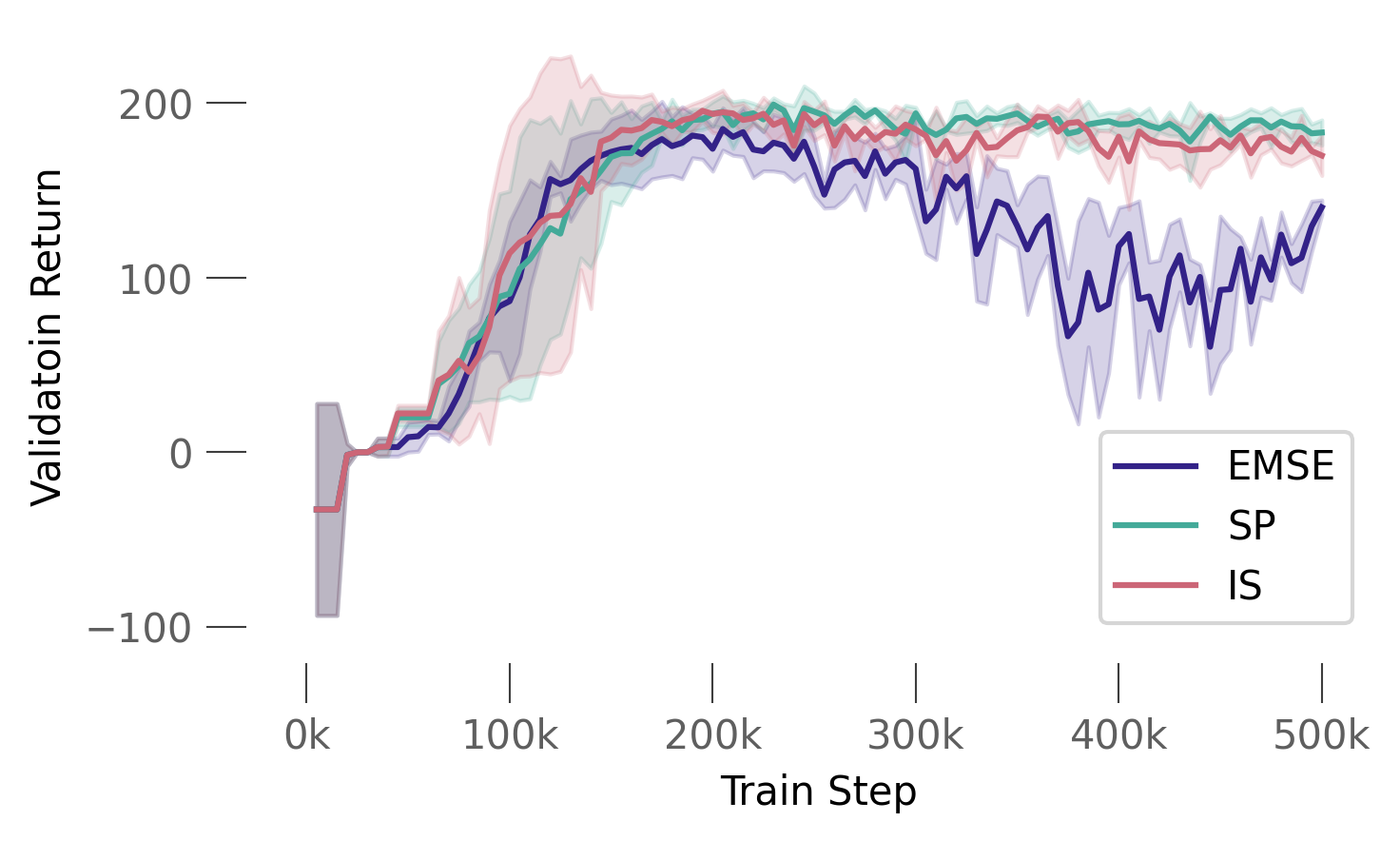}
        \caption{%
            Validation performance of risk-sensitive SAC under undiscounted returns ($\gamma=1$).
        }
        \label{fig:rssac_ud}
    \end{subfigure}
    \hfill
    \begin{subfigure}[b]{0.48\textwidth}
        \centering
        \includegraphics[width=\linewidth]{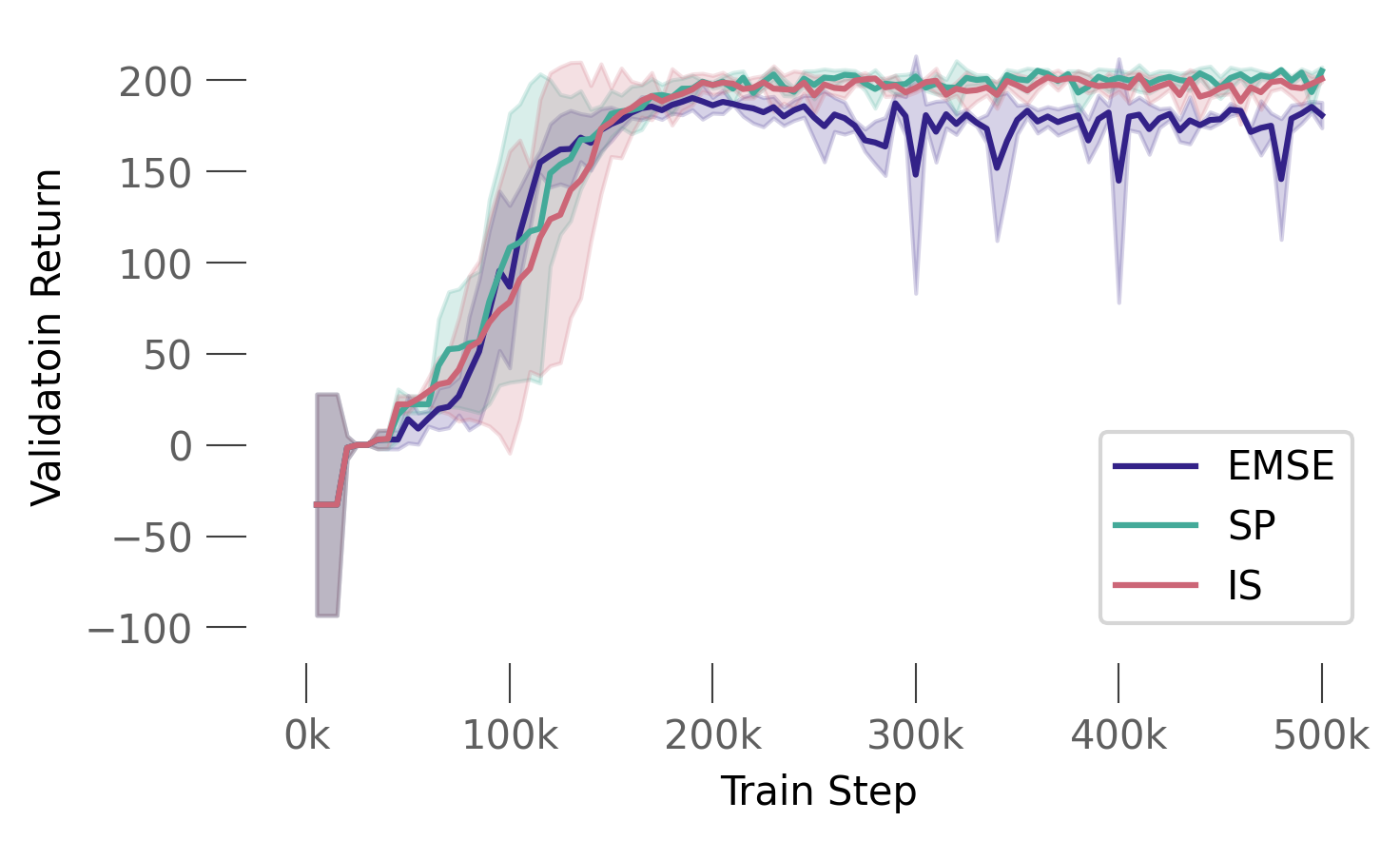}
        \caption{%
            Validation performance of RSSAC under discounted returns ($\gamma=0.99$).
        }
        \label{fig:rssac_d}
    \end{subfigure}
    \caption{%
        Loss performance on the RSSAC problem~\cite{enders2024risk}. Learning curves depict the mean validation return during the training process.
        Each line represents the average over three random seeds, with shaded areas indicating $\pm 1$ standard deviation.
        The~\eqref{eq:exp_mse} loss destabilizes training.
    }
    \label{fig:rssac}
\end{figure}

This experiment aims to show that loss~\eqref{eq:is_loss} can act as a performance-enhancing drop-in replacement of known losses in complex RL algorithms.
We adopt the experimental setup of~\cite{enders2024risk}, which resembles the warehouse management problem.
The environment is a $5 \times 5$ grid.
Each time, items randomly and independently appear in grid cells according to some probability distribution unknown to the agent.
The agent can move up, down, left, right, or stay.
Any movement costs $-1$ to the agent.
When the agent reaches the cell with an item, it picks it.
If the agent delivers the item to the specific cell, it receives a $+15$ reward.
If an item is not picked during some period after its spawn, it disappears.
The agent can carry at most one item at a time.
The duration of episodes is constant.

The authors study the problem of learning a policy robust to distribution shifts.
They propose the risk-averse soft actor-critic algorithm with exponential utility to learn such policies.
The algorithms learns a $Q$-function as a critic.
The authors rely on the approximate equality $\E{X^\gamma} \approx \E{X}^\gamma$, where $\gamma$ is a discount factor, and derive the Bellman equation of the form:
\begin{equation}\label{eq:rssac_bellman}
    Q^\pi_\theta(s, a) = \EXD[s', a']{r(s, a, s') + \gamma \kappa \cH(\pi_\phi\bigl(\cdot \mid s)\bigr) + \gamma Q_{\theta^-}^\pi(s', a')},
\end{equation}
where $\cH \bigl( \pi_\phi(\cdot \mid s) \bigr)$ is the entropy of the policy $\pi_\phi$ and $\kappa$ is the entropic regularization coefficient.
The authors note the unstable nature of~\eqref{eq:exp_mse} loss and propose to regress $Q_\theta^\pi(s, a)$ on sampling-based estimation of the rhs of~\eqref{eq:rssac_bellman}.
They replace the expectation over $s'$ with a single $s'$ from the replay buffer and directly compute the expectation over the next actions to estimate the regression target.
As noted by the authors, this results in a biased estimation of the $Q$-function.
Nevertheless, they do it because they do not aim to recover the correct $Q$-function, but to learn the close-to-optimal policy.

We run the proposed soft actor-critic algorithms with minor modifications.
Instead of relying on the direct estimation of the rhs of~\eqref{eq:rssac_bellman}, we learn the $Q$-function using the unbiased objective.
We compare the losses in Figure~\ref{fig:rssac}.
The figure shows the validation return during the training process.
To measure the robustness, the return is computed with the probability of item appearance different from the one used during training.
The objective~\eqref{eq:exp_mse} destabilizes the training process.
The loss~\eqref{eq:is_loss} performs on par with~\eqref{eq:soft_plus} and consistently outperforms the~\eqref{eq:exp_mse} objective in complex RL algorithms and environments.

\vspace{-2mm}
\section{Conclusion}
\vspace{-2mm}

This paper considers the Itakura-Saito loss first proposed in~\cite{murray2022deep}, a simple loss function to learn the value function in risk-averse MDPs.
We proved that the minimizer of this loss is indeed the correct value function and derived the corresponding SA rule.
Numerical experiments show that alternatives either destabilize training or do not recover the correct value function.
Itakura-Saito loss can be used as a drop-in replacement in complex RL algorithms.
% \vspace{-1em}
% \paragraph{Limitations}
% The proposed methodology only applies to the exponential utility.
% Though we do not see this as a problem, since many methods rely on solving the exponential MDP~\cite{hau2023entropic, henry2019martingale, nutz2023martingale}.
% \vspace{-1em}
% \paragraph{Broader impact}
% While our work is methodological in nature and tested in synthetic settings, we believe that developing reliable mathematical tools for optimization under uncertainty is necessary to make AI more acceptable and dependable in real-world applications.
% This paper presents work whose goal is to advance the field of RL.
% There are many potential societal consequences of our work, none which must be specifically highlighted here.

% \begin{ack}
% Grant number blablalba
% \end{ack}

\printbibliography[notkeyword={phys}]

%%%%%%%%%%%%%%%%%%%%%%%%%%%%%%%%%%%%%%%%%%%%%%%%%%%%%%%%%%%%

% \input{checklist}

%%%%%%%%%%%%%%%%%%%%%%%%%%%%%%%%%%%%%%%%%%%%%%%%%%%%%%%%%

\newpage
\appendix

\section{Analytical Solutions}\label{ap:sol}

We provide here the closed‐form derivations for the ground‐truth benchmarks reported in the experiments (cf. Section~\ref{sec:exp} in the main text).
All results are obtained under the discrete‐time Bachelier dynamics
\[
   S_{t+1}=S_t+Z_{t+1},\qquad
   Z_{t+1}\sim\mathcal N(\mu,\sigma^{2}),\quad t=0,\dots,T-1 ,
\]
and use the exponential‐utility \emph{certainty equivalent}
\[
   \widetilde{\mathbb E}_{\!\alpha}[X]\;=\;
   -\frac1\alpha\log\mathbb E[\exp\{-\alpha X\}],\qquad \alpha>0 .
\]
For any Gaussian variable \(G\sim \mathcal N(m,v)\) one has
\[ \mathbb E\exp\bigl\{-\alpha G\bigr\}]=
   \exp\bigl\{-\alpha\mathbb E[G]+\tfrac{\alpha^{2}}2\operatorname{Var}(G)\bigr\} = \exp\bigl\{-\alpha m+\tfrac{\alpha^{2}}2v\bigr\}
\] a fact used repeatedly below.

\subsection{Pure trading with Gaussian reward}
\label{ap:gauss}

Consider a single‐period reward \( r_t=a_t(S_{t+1}-S_t)=a_tZ_{t+1} \).
Since $(Z_t)$ are i.i.d.\ Gaussians, the optimal \emph{deterministic policy} $\pi^{\!*}$ is time-independent and the value function does not depend on  \(S_t\). For any fixed action \(a\) the one-step certainty-equivalent return is
\[
  \widetilde{\mathbb E}_{\!\alpha}[aZ_{t+1}]
  = -\frac1\alpha\log\mathbb E[\exp\{-\alpha aZ_{t+1}\}]
  = \mu a-\tfrac12\alpha a^{2}\sigma^{2}.
\]
Maximising this quadratic over $a$ gives
\(
a_t^{\!*}= \frac{\mu}{\alpha\sigma^{2}}
\)
and the corresponding single‐step optimum
\(
\max_a \widetilde{\mathbb E}_{\!\alpha}[aZ_{t+1}]=
   \frac{\mu^{2}(T-t)}{2\alpha\sigma^{2}}.
\)
Summing over $T-\tau$ periods yields the value function
\( V^{\!*}(s_t) =\mu^{2}(T-t)/(2\alpha\sigma^{2})\).
Hence, the optimal policy is constant in time and satisfies
\[
   \boxed{\;
      \pi^{\!*}(s_t)=\frac\mu{\alpha\sigma^{2}},
      \quad
      V^{\!*}(s_t)=\frac{\mu^{2}(T-t)}{2\alpha\sigma^{2}},\;}
\]
as quoted in Eq.~(14) in Sec.~\ref{sec:exp:toy}. No additional integrability condition is required here because $\exp(-\alpha a Z)$ is integrable for all $\alpha$.

\subsection{Trading with a quadratic terminal penalty (\(\mu=0\))}

Let \(x_t =S_t-S_0\) be the centred price.
Rewards are \(r_t=a_tZ_{t+1}\) for \(t<T-1\), while at $t=T-1$ the agent additionally incurs a terminal penalty
\(
r_{T-1}=a_{T-1}Z_{T}-\frac12x_{T}^{2}
\).
We look for a solution of the form with boundary conditions:
\[
V_t(x) = -\frac{1}{2}K_tx^2 + C_t, \quad K_T = 1,\quad C_T=0.
\]
Since \(x_{t+1}=x_t+Z_{t+1}\) conditional on \(x_t\), \(x_{t+1}\) is Gaussian, so
\begin{multline*}
   \mathbb E \Bigl[\exp\Bigl\{-\alpha\{aZ_{t+1}-\tfrac12K_{t+1}(x_t+Z_{t+1})^{2}+C_{t+1}\}\Bigr\}\Bigr]
   = \\
   = \exp\Bigl\{-\alpha C_{t+1}+\frac{\alpha K_{t+1}}2x_t^{2}\Bigr\}
     (1-\alpha K_{t+1}\sigma^{2})^{-\frac12}
     \exp\!\left(
        -\frac{\alpha\sigma^{2}}{2(1-\alpha K_{t+1}\sigma^{2})}(a-K_{t+1}x_t)^{2}
     \right).
\end{multline*}

By direct calculation of the certainty equivalent at each step, one finds that the exponential inside the expectation is maximized when \(a=K_{t+1}x_t\), so the optimal policy is
\[
      a_t^{\!*}=K_{t+1}x_t
\]
and the maximized Bellman update is dynamic recursion
\[
   V_t(x)=C_{t+1}-\frac12K_{t+1}x^{2}
          +\frac1{2\alpha}\log(1-\alpha K_{t+1}\sigma^{2}).
\]
Matching coefficients gives \(K_t=K_{t+1}\) and \(K_T=1\), hence \(K_t\equiv1\) for all \(t\). Meanwhile, the scalars \(C_t\) satisfy
\(
   C_t=C_{t+1}+\frac1{2\alpha}\log(1-\alpha\sigma^{2}),
\)
with boundary \( C_T=0\), which solves to
\[
C_t=\frac{T-t}{2\alpha}\log(1-\alpha\sigma^{2}).
\]
The recursion is well-defined only if $\alpha\sigma^{2}<1$,
ensuring the Gaussian moment-generating function exists. Under the above condition the optimal policy and value are
Provided \( \alpha\sigma^{2}<1\) (so that the required moment‐generating function is well‐defined), the optimal policy and value function are
\[
   \boxed{\;
      \pi^{\!*}(s_t)=S_t-S_0,\;
      V^{\!*}(s_t)=
        -\frac12(S_t-S_0)^{2}
        +\frac{T-t}{2\alpha}\log(1-\alpha\sigma^{2}),
      \quad\alpha\sigma^{2}<1,\;}
\]
which coincides with Eq.~(16) in Sec.~\ref{sec:exp:toy}.

% \subsection{Risk-neutral reference for deep hedging}
% \label{ap:call}

% When \(\alpha\to0\) the agent’s objective becomes risk‐neutral.
% In this case, any policy for which the expected return is well-defined is optimal.
% The expected return is given by:
% \[
% \E{R^\pi} = \E{ \max \{S_T - S_0, 0\}} = \sigma\sqrt{\tfrac{T}{2\pi}}
% \]
% by the Bachelier formula~\cite{bachelier1900theorie, choi2022black}.

\section{Experimental Details}\label{ap:details}
\subsection{Portfolio Optimization and Hedging}
Throughout all experiments in this section, we take $T = 10$.
The code is written in pure \texttt{PyTorch~2.7.0}.
We approximate the state-value function and policy with multi-layer perceptrons with Mish activation and 2 hidden layers, 64 neurons each.
We used the Adam optimizer (\texttt{lr=1e-4}, $\beta_1 = $ 0.99, $\beta_2 = $ 0.999) and applied the following learning rate schedule:
\begin{itemize}
\item During the first 1k iterations, the learning rate grows linearly with a start factor of $0.01$.
\item It then remains constant for the next 49k (149k for Deep Hedging experiments) iterations.
\item Afterwards, we apply cosine decay with \texttt{T\_max} equal to 50k (150k for Deep Hedging).
\end{itemize}

Each training batch has a size of 1024, with gradient values clipped at $1$ and the gradient norms clipped at $10$. Each experiment is repeated with five independent seeds that affect weight initialization and mini-batch sampling. The other parameters are listed in the table:

\begin{table}[h]
  \centering
  \label{tab:momcond}
  \begin{tabular}{rccc}
    \toprule
     Experiment & $\mu$ & $\sigma$ & $\alpha$ \\
    \midrule
     Gaussian return & $0.03$ &  \multirow{3}{2em}{$\frac{0.2}{\sqrt{10}}$} & $1$  \\
     Quadratic penalty & \multirow{2}{0.5em}{$0$} &  & $100$  \\
     Deep Hedging & & & $\{0.1,0.3,1,3,10\}$ \\
    \bottomrule
  \end{tabular}
\end{table}

Our experiments typically run on an A100 GPU, with up to three runs in parallel on a single device. For the Gaussian and quadratic tasks, we use 100k training iterations, requiring between 25 and 65 minutes of wall‐clock time depending on concurrent jobs. The Deep Hedging setup generally requires 50 minutes to 2 hours for each run, owing to its longer 300k iteration schedule.

\subsection{Risk-Averse Soft Actor-Critic (RSSAC) for Robust Combinatorial Optimization}

We employ the open‐source code provided by  the authors of~\cite{enders2024risk}%
\footnote{\url{https://github.com/tumBAIS/RiskSensitiveSACforRobustDRLunderDistShifts}.}
and leave all network, replay-buffer and optimiser hyper-parameters unchanged except for the three modifications below.
\begin{enumerate}
    \item We took $\beta = -0.1$, which is $\alpha = 0.1$ in our notations;
    \item The discount factor is $\gamma \in \{0.99, 1\}$;
    \item The loss function is simpler than in~\cite{enders2024risk} and do not involve computing the expectation over the next actions.
\end{enumerate}

Every RSSAC run (500 k environment steps) takes $\approx$\,2 h on a single A100 GPU. No concurrent jobs are scheduled on the same device.

\section{Statement and Proof of Proposition~\ref{prop:is}}\label{ap:proof}
Before stating the result we recall the Itakura–Saito loss \eqref{eq:is_loss}
\begin{equation}
    \cL^\text{IS}(\theta) \stackrel{\text{def}}{=} \alpha^{-2} \E[(s, a, s')]{\exp\bigl\{ \alpha \delta_{\tilde V}(\theta) \bigr\} - \alpha \delta_{\tilde V}(\theta) - 1}
\end{equation}
and
\begin{equation}\label{eq:delta_recall}
    \delta_V(\theta) = V^\pi_\theta(s) - r(s, a, s') -  V^\pi_{\theta^-}(s').
\end{equation}
\begin{proposition*}
    Assume the following hold:
    \begin{enumerate}
        \item\label{as:target} The target network weights $\theta^-$ are a copy of the main network weights $\theta$, $V^\pi_{\theta^-}(s') = \mathrm{stop\ gradient}\bigl(V_\theta^\pi(s')\bigr)$ in~\eqref{eq:delta_recall};
        \item\label{as:ex} Both $\E{\exp \bigl\{ -\alpha r(s, a, s') - \alpha \tilde V^\pi_{\theta^-}(s') \bigr\}}$ and $\E{r(s, a, s') + \tilde V^\pi_{\theta^-}(s')}$ exist;
        \item\label{as:att} $\cL^{ \text{IS}}$ attains its minimum at $\theta^\star$.
    \end{enumerate}
    Then, $\tilde V^\pi_{\theta^*}$ satisfies the risk–averse Bellman equation~\eqref{eq:evv}:
\begin{equation}
\tilde V^\pi(s) = \EXD[a, s']{r(s, a, s') + \tilde V^\pi(s')}.
\end{equation}
\end{proposition*}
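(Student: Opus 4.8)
The plan is to recognize \eqref{eq:is_loss} as an expected Bregman divergence and then invoke the optimality property recalled after \eqref{eq:breg}: for a fixed target the expected Bregman divergence is minimized exactly at the conditional mean. First I would make the identification precise. With $\varphi(z)=-\log z$, put the prediction $y_s\defeq\exp\{-\alpha\tilde V^\pi_\theta(s)\}$ and the target $X\defeq\exp\{-\alpha r(s,a,s')-\alpha\tilde V^\pi_{\theta^-}(s')\}$, so that $d_{\mathrm{IS}}(X,y_s)=X/y_s-\log(X/y_s)-1=\exp\{\alpha\delta_{\tilde V}(\theta)\}-\alpha\delta_{\tilde V}(\theta)-1$ and hence $\cL^{\mathrm{IS}}(\theta)=\alpha^{-2}\,\E[(s,a,s')]{d_{\mathrm{IS}}(X,y_s)}$. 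Reading the proposition as an optimization over value functions (equivalently, over a parametrization rich enough that $\tilde V^\pi_\theta(s)$ may be chosen freely at each $s$), the values at distinct states decouple, so a global minimizer $\theta^\star$ (which exists by Assumption~\ref{as:att}) must minimize, for $\rho$-almost every state $s$ drawn from the data distribution, the scalar objective $v\mapsto\E[(a,s')\mid s]{d_{\mathrm{IS}}\!\left(X,e^{-\alpha v}\right)}$.

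Next I would solve this one-dimensional problem explicitly. Let $m(s)\defeq\E[(a,s')\mid s]{\exp\{-\alpha r(s,a,s')-\alpha\tilde V^\pi_{\theta^-}(s')\}}$ and $g_s(v)\defeq\E[(a,s')\mid s]{\exp\{\alpha(v-r-\tilde V^\pi_{\theta^-}(s'))\}-\alpha(v-r-\tilde V^\pi_{\theta^-}(s'))-1}$. Assumption~\ref{as:ex} makes $m(s)$ a finite positive number and makes the linear term integrable, so $g_s$ is finite and (differentiating under the expectation, again justified by Assumption~\ref{as:ex}) smooth with $g_s''(v)=\alpha^2 e^{\alpha v}m(s)>0$; thus $g_s$ is strictly convex and its unique minimizer is the critical point. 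Solving $g_s'(v)=\alpha e^{\alpha v}m(s)-\alpha=0$ gives $v=-\alpha^{-1}\log m(s)=\EXD[a,s']{r(s,a,s')+\tilde V^\pi_{\theta^-}(s')}$ — the concrete instance of $\arg\min_y\E{d_\varphi(X,y)}=\E{X}$ \cite{banerjee2005optimality} for the positive target $X$. Consequently $\tilde V^\pi_{\theta^\star}(s)=\EXD[a,s']{r(s,a,s')+\tilde V^\pi_{\theta^-}(s')}$ for $\rho$-a.e.\ $s$.

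The last step closes the recursion: by Assumption~\ref{as:target} the target network at the optimum is a stop-gradient copy of the trained one, $\tilde V^\pi_{\theta^-}(\cdot)=\tilde V^\pi_{\theta^\star}(\cdot)$, so the identity above is precisely \eqref{eq:evv}. I expect the main obstacle to be the passage from ``$\theta^\star$ minimizes the parametrized loss'' to ``$\tilde V^\pi_{\theta^\star}(s)$ is the pointwise minimizer at ($\rho$-almost) every $s$'': this is exactly what forces the reading of the statement as an optimization over a sufficiently expressive function class, and it is also where Assumption~\ref{as:target} is indispensable — without the stop gradient the target $X$ itself depends on $\theta$, the first-order condition no longer factorizes, and the subsequent substitution $\theta^-=\theta^\star$ would be unavailable. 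Some routine care with conditional versus unconditional integrability, and with the $\rho$-null set of states where the conditional expectations could degenerate, is also needed, but is covered by Assumption~\ref{as:ex}.
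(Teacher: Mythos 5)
Your proposal is correct and follows the same overall route as the paper: identify $\cL^{\text{IS}}$ as an expected Itakura--Saito Bregman divergence between the exponentiated target and the exponentiated prediction, use the fact that the (conditional) mean minimizes an expected Bregman divergence to get $\exp\{-\alpha\tilde V^\pi_{\theta^\star}(s)\}=\E[a,s']{\exp\{-\alpha r(s,a,s')-\alpha\tilde V^\pi_{\theta^\star}(s')\}}$, and then take logarithms to recover~\eqref{eq:evv}. The one substantive difference is that the paper obtains the mean-minimization step by citing Theorem~1 of~\cite{tan2013optimal} under Assumption~\ref{as:ex}, whereas you prove it from scratch for this particular divergence: you write the per-state scalar objective $g_s(v)$, verify $g_s''(v)=\alpha^2 e^{\alpha v}m(s)>0$, and solve the first-order condition explicitly. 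This makes your argument self-contained and also surfaces two points the paper leaves implicit --- that the reduction from a minimizer in $\theta$ to a pointwise minimizer at ($\rho$-almost) every state requires a sufficiently expressive parametrization, and that Assumption~\ref{as:target} is what lets you substitute $\theta^-=\theta^\star$ to close the recursion. Both of these are genuine gaps in rigor in the paper's one-line application of the cited theorem, and your write-up handles them correctly.
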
\begin{proof}
    First, note that
    \[
    \cL^\text{IS}(\theta) = \alpha^{-2}\E[s, a, s']{d_\text{IS} \left(\ \exp \left\{ -\alpha r(s, a, s') -\alpha \tilde V_\theta^\pi(s') \right\},\ \exp\left\{ -\alpha\tilde V_\theta^\pi(s) \right\}\ \right)},
    \]
    where $\alpha^{-2}$ is a scaling coefficient.
    Thanks to the assumption~\ref{as:ex}, one can apply Theorem 1 in~\cite{tan2013optimal}, so for the minimizer it holds:
    \[
    \exp\left\{ -\alpha\tilde V_{\theta^*}^\pi(s) \right\} = \E[a, s']{\exp \left\{ -\alpha r(s, a, s') -\alpha \tilde V_{\theta^*}^\pi(s') \right\}}.
    \]
    Taking the logarithm of both sides, dividing by $-\alpha$ and recalling the definition of $\EXD{\cdot}$ we obtain:
    \[
    \tilde V_{\theta^*}^\pi(s) = \EXD[a, s']{r(s, a, s') + \tilde V_{\theta^*}^\pi(s')}.
    \]
\end{proof}

\section{Geometric and Field-Theoretic Interpretations of the IS Loss}

This appendix explores the deeper mathematical structure behind the Itakura-Saito (IS) loss and its regularization. While the main paper focuses on empirical performance and optimization theory, here we reinterpret the IS loss in the language of variational calculus and field theory.

We show that, under regularization, the IS loss induces an energy functional with features reminiscent of classical scalar field theories. In particular, we analyze its behavior in the linearized regime, where it yields a propagator with long-range correlations and, in the high-target limit, reveals an emergent conformal symmetry. These structures illuminate why the IS loss promotes global consistency and smoothness in prediction.

We then speculate on a broader analogy: drawing inspiration from the AdS/CFT correspondence in theoretical physics, we suggest that the robustness and coherence induced by IS-trained models may be seen as a form of holographic encoding — where local losses reflect and preserve global structure.

No advanced physics background is required to follow this appendix — only a curiosity for the beautiful human adventure of understanding the universe through mathematics, symmetry, and learning.

\subsection{Conformal Symmetry and Long-Range Correlations Induced by Itakura-Saito Loss}

To understand the structural properties of the IS loss beyond pointwise optimization, we adopt a variational viewpoint by constructing an \emph{action functional}. This approach is standard in physics and calculus of variations: instead of minimizing a discrete loss over samples, we define a functional that integrates a local loss density over a continuous domain. In this setting, the learned function $\phi(x)$ becomes a field, and the loss becomes an energy functional whose minimizers reflect both local fit and global coherence. This field-theoretic perspective allows us to uncover the long-range smoothing and scale-invariance properties implicit in the IS loss when combined with regularization.

We analyze the field-theoretic formulation of the IS loss with smoothness regularization and show how conformal symmetry emerges in the continuum limit.

\subsection{Two-Point Correlation from the IS-Induced Action}

We consider the action functional:
\begin{equation}
\mathcal{S}[\phi] = \int dx \left[ \lambda \left( \frac{d\phi}{dx} \right)^2 + \frac{y(x)}{\phi(x)} - \log\left(\frac{y(x)}{\phi(x)}\right) - 1 \right]
\end{equation}

To study long-range correlations, we simplify and linearize around a constant input $y(x) = y_0$, assuming:
\begin{equation}
\phi(x) = y_0 + \varepsilon(x), \quad \varepsilon(x) \ll y_0
\end{equation}

Expanding the IS potential term:
\begin{align*}
\frac{y_0}{\phi} &= \frac{1}{1 + \varepsilon / y_0} \approx 1 - \frac{\varepsilon}{y_0} + \left(\frac{\varepsilon}{y_0}\right)^2 \\
\log\left(\frac{y_0}{\phi}\right) &= -\log\left(1 + \frac{\varepsilon}{y_0}\right) \approx -\left( \frac{\varepsilon}{y_0} - \frac{1}{2} \left(\frac{\varepsilon}{y_0}\right)^2 \right)
\end{align*}

So the potential becomes:
\begin{equation}
V_{\text{IS}}(y_0, \phi) \approx \frac{1}{2} \left( \frac{\varepsilon}{y_0} \right)^2
\end{equation}

The linearized action reads:
\begin{equation}
\mathcal{S}[\varepsilon] \approx \int dx \left[ \lambda \left( \frac{d\varepsilon}{dx} \right)^2 + \frac{1}{2 y_0^2} \varepsilon^2(x) \right]
\end{equation}

This is a standard Gaussian field theory with mass $m^2 = \frac{1}{2y_0^2}$.

\subsection{Propagator and Emergence of Conformal Limit}

The propagator satisfies:
\begin{equation}
\left( -\lambda \frac{d^2}{dx^2} + \frac{1}{2 y_0^2} \right) G(x) = \delta(x)
\end{equation}

Here, $\delta(x)$ denotes the \emph{Dirac delta function}, a generalized distribution satisfying $\int \delta(x) f(x)\, dx = f(0)$ for any smooth test function $f$. It models a pointwise source and is used to define Green's functions in field theory.

Solving in 1D yields:
\begin{equation}
G(x) = \frac{y_0}{\sqrt{2\lambda}} e^{-\frac{|x|}{\sqrt{2\lambda} y_0}}
\end{equation}

In the conformal limit $y_0 \to \infty$ or $\lambda \to 0$:
\begin{equation}
G(x) \sim \frac{1}{|x|}
\end{equation}

This reflects:
\begin{itemize}
  \item Absence of intrinsic length scale,
  \item Power-law correlation decay,
  \item Conformal symmetry and long-range propagation.
\end{itemize}

\subsection{Implications for Optimization Dynamics}

We now show how IS loss, being scale-invariant, improves optimization convergence.

Let $\theta \in \mathbb{R}^d$ be model parameters, with outputs $f(\theta) \in \mathbb{R}^n$, and targets $y$. Assume gradient descent:
\begin{equation}
\theta_{t+1} = \theta_t - \eta \nabla L(\theta_t)
\end{equation}

We compare:
\begin{itemize}
  \item MSE: $L_{\text{MSE}}(\theta) = \frac{1}{2} \| f(\theta) - y \|^2$
  \item IS: $L_{\text{IS}}(\theta) = \sum_i \left( \frac{y_i}{f_i(\theta)} - \log\left(\frac{y_i}{f_i(\theta)}\right) - 1 \right)$
\end{itemize}

\textbf{Conditioning and Hessian Geometry:}

\begin{itemize}
\item MSE: Hessian is $H = J^\top J$ where $J = \partial f / \partial \theta$.
\item Large spread in $y$ leads to poor conditioning.
\item IS: Penalizes relative differences $f(\theta)/y$.
\item Built-in normalization leads to better conditioned Hessian and stable steps.
\end{itemize}

\textbf{Local Approximation in 1D:}

Let $f = y + \varepsilon$ with small $\varepsilon$. Then:
\begin{align*}
L_{\text{IS}}(f) &= \frac{y}{f} - \log\left(\frac{y}{f}\right) - 1 \\
&\approx \frac{1}{2} \left( \frac{\varepsilon}{y} \right)^2
\end{align*}

Hence IS behaves locally like a rescaled MSE:
\begin{equation}
L_{\text{IS}}(f) \approx \frac{1}{2 y^2} \varepsilon^2
\end{equation}

Large $y$ $\Rightarrow$ lower weight, small $y$ $\Rightarrow$ higher weight. This performs implicit preconditioning, akin to natural gradients.

Natural gradients arise in information geometry as an improvement over standard (Euclidean) gradients. Instead of computing updates in the raw parameter space, the natural gradient rescales the direction of steepest descent using the inverse Fisher information matrix. This aligns updates with the intrinsic curvature of the loss landscape, leading to faster convergence and better conditioning. In the context of IS loss, the local reweighting of errors by $1/y^2$ mimics this effect: low-output regions receive stronger updates, similar to how natural gradients emphasize directions of low Fisher variance~\cite{amari1998natural}.

\section{Conformal Invariance Improves Statistical Conditioning}

In this appendix, we formalize the intuition that conformal invariance, as induced by the Itakura-Saito (IS) loss, statistically improves the conditioning of the optimization problem. We do so by analyzing the distribution of Hessian spectra under a Gaussian prior over models.

\textit{Remark.} This theorem highlights a novel geometric interpretation of scale-invariance: it not only regularizes the optimization surface but also statistically preconditions the curvature — providing a mathematical basis for the empirical advantages of IS loss.

We propose a theoretical explanation for the improved optimization behavior observed when using the Itakura-Saito (IS) loss, through the lens of statistical conditioning and symmetry constraints.

\begin{theorem*}[Improved Spectral Conditioning under Conformal Invariance]
Let $\phi \sim \mathcal{G}$ be a Gaussian ensemble of models (e.g., fields or neural networks), and let $\mathcal{G}_{\text{conf}} \subset \mathcal{G}$ be the subset of models such that the IS-induced action $\mathcal{S}_{\text{IS}}[\phi]$ is conformally invariant.

Define a spectral conditioning measure $\lambda(\phi)$, such as the variance or entropy of the eigenvalue spectrum of the Hessian at $\phi$. Then:
\begin{equation}
\mathbb{E}_{\phi \in \mathcal{G}_{\text{conf}}}[\text{Var}(\lambda(\phi))] < \mathbb{E}_{\phi \in \mathcal{G}}[\text{Var}(\lambda(\phi))]
\end{equation}
\end{theorem*}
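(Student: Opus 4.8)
The plan is to reduce the statement to a textbook ``conditioning cannot increase variance'' argument, once the ensemble $\mathcal{G}$ and its conformal subfamily $\mathcal{G}_{\text{conf}}$ are expressed through the Hessian modes identified in the linearized analysis above. First I would recall that, in the quadratic regime around a background $y_0$, the Hessian of the IS-induced action $\mathcal{S}_{\text{IS}}$ is the Sturm--Liouville operator $H_{y_0}=-\lambda\,\frac{d^{2}}{dx^{2}}+\frac{1}{2y_0^{2}}$, whose spectrum is $\{\lambda k^{2}+\tfrac{1}{2y_0^{2}}\}_k$. Hence any spectral conditioning functional $\lambda(\phi)$ — spectral variance, log-determinant, or entropy of the eigenvalue density — is, for a Gaussian model $\phi$, a smooth function of two ingredients: the regularization strength $\lambda$ (fixed) and the realized scale $y_0=y_0(\phi)$, which fluctuates across $\mathcal{G}$. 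The structural input is that conformal invariance of $\mathcal{S}_{\text{IS}}$ is exactly the vanishing of the mass term $\tfrac{1}{2y_0^{2}}$: the dilatation Ward identity clamps the unique relevant coupling $m^{2}$ to zero, so $\mathcal{G}_{\text{conf}}$ is precisely the fibre $\{y_0=\infty\}$, i.e.\ the leaf on which $H$ carries no intrinsic length scale, consistent with the $G(x)\sim 1/|x|$ limit obtained earlier.

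Second, I would run the comparison through the law of total variance. Partition $\mathcal{G}$ by the value of the scale modulus $y_0$, so that $\mathcal{G}_{\text{conf}}$ is one fibre of the partition (the scale-free one). Writing $\lambda(\phi)=g\bigl(y_0(\phi),\xi\bigr)$ with $\xi$ the remaining, scale-independent fluctuation coordinates, one has
\[
\text{Var}_{\mathcal{G}}\bigl(\lambda(\phi)\bigr)=\mathbb{E}_{y_0}\bigl[\text{Var}\bigl(g(y_0,\xi)\mid y_0\bigr)\bigr]+\text{Var}_{y_0}\bigl(\mathbb{E}[g(y_0,\xi)\mid y_0]\bigr).
\]
The between-fibre term is strictly positive because $y_0\mapsto\mathbb{E}[g(y_0,\xi)\mid y_0]$ is non-constant: the mass $\tfrac{1}{2y_0^{2}}$ shifts every eigenvalue and therefore genuinely moves any reasonable conditioning measure. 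Evaluating the within-fibre average on the conformal leaf, where the relevant coupling is frozen and $g$ depends on strictly fewer fluctuating coordinates (so its conditional variance there is no larger than on a generic fibre), gives $\text{Var}_{\mathcal{G}_{\text{conf}}}(\lambda(\phi))\le\mathbb{E}_{y_0}[\text{Var}(g\mid y_0)]<\text{Var}_{\mathcal{G}}(\lambda(\phi))$, which is the claim. If instead one reads $\mathbb{E}_{\phi\in\mathcal{G}_{\text{conf}}}[\text{Var}(\lambda(\phi))]$ with $\lambda(\phi)$ a per-model spectral spread and $\mathbb{E}_\phi$ the ensemble average, the identical decomposition applies with $\lambda(\phi)$ in place of $g$.

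The main obstacle is exactly what the informal phrasing hides: under a generic Gaussian prior the conformal subfamily $\mathcal{G}_{\text{conf}}$ is measure zero, so ``$\mathbb{E}_{\phi\in\mathcal{G}_{\text{conf}}}$'' must be assigned meaning, either as a genuine conditional expectation with respect to the sub-$\sigma$-algebra generated by the scale-free modes, or as the $\epsilon\to 0$ limit of the relaxed constraint $\{m^{2}\le\epsilon\}$. Making this precise requires (i) turning the field-theoretic heuristic of the previous appendix into an actual statement that the dilatation Ward identity isolates $m^{2}$ as the unique relevant direction of the quadratic form $H_{y_0}$, and (ii) checking that the disintegration of the Gaussian measure along the scale modulus is well defined and that $\lambda(\cdot)$ is integrable on almost every fibre. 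Once this measure-theoretic scaffolding is in place, the remaining pieces — the variance decomposition, strict positivity of the between-fibre term, and monotonicity of conditioning — are routine.
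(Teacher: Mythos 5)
Your route is genuinely different from the paper's: the paper offers only a one\-paragraph heuristic sketch that appeals to ``known results from random matrix theory and information geometry'' without making them precise (and concedes a few lines later that ``a general formal proof connecting conformal invariance to better conditioning is still open''), whereas you try to make the claim concrete by disintegrating the Gaussian ensemble along the scale modulus $y_0$ and invoking the law of total variance. Your diagnosis of the measure\-theoretic issues --- that $\mathcal{G}_{\text{conf}}$ is a measure\-zero (indeed, limiting) fibre and that $\mathbb{E}_{\phi\in\mathcal{G}_{\text{conf}}}[\mathrm{Var}(\lambda(\phi))]$ needs to be assigned a meaning --- is accurate and goes beyond anything in the paper.

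Nevertheless, the decisive step of your argument does not go through. The law of total variance controls the \emph{average over fibres} of the conditional variance; it says nothing about the conditional variance on one \emph{particular} fibre, which is what $\mathrm{Var}_{\mathcal{G}_{\text{conf}}}(\lambda)$ is. Your patch --- that on the conformal leaf ``$g$ depends on strictly fewer fluctuating coordinates, so its conditional variance there is no larger than on a generic fibre'' --- is an assertion, not a derivation, and it is precisely where the content of the theorem lives: a specific fibre can carry more conditional variance than the fibre average. Worse, in your own parametrization the Hessian $H_{y_0}=-\lambda\,\frac{d^{2}}{dx^{2}}+\frac{1}{2y_0^{2}}$ is a deterministic function of $y_0$ alone, so conditional on any fibre the spectrum does not fluctuate, every conditional variance vanishes, and the inequality degenerates to the statement that $y_0\mapsto\mathbb{E}[\lambda\mid y_0]$ is non\-constant --- the $\xi$\-coordinates never enter and the fibre comparison is vacuous. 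If $\xi$ is meant to enter the Hessian (as it must for the theorem to be non\-trivial), you have not said how, nor shown that the conformal fibre is a low\-conditional\-variance one. Finally, the spectrum $\{\lambda k^{2}+\tfrac{1}{2y_0^{2}}\}_k$ is unbounded, so the ``variance or entropy of the eigenvalue spectrum'' is infinite absent a UV cutoff, which must be introduced before either side of the inequality is finite. The fair summary is that neither your argument nor the paper's sketch constitutes a proof, but yours at least isolates exactly where a proof would have to do work.
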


\begin{proof}[Sketch of proof]
The IS loss induces a field theory with conformal symmetry in the limit $y_0 \to \infty$, leading to scale-free correlations. When restricting to $\mathcal{G}_{\text{conf}}$, the underlying symmetry enforces statistical regularity in the structure of the Hessian. In contrast, over the full Gaussian ensemble $\mathcal{G}$, more arbitrary fluctuations are allowed. By known results from random matrix theory and information geometry, imposing such symmetries reduces the variance and entropy of the spectral distribution. This leads to improved average conditioning.
\end{proof}

This supports the empirical observation that the IS loss improves the landscape geometry for gradient descent by constraining optimization trajectories to a submanifold of better-conditioned models.

\subsection{Discussion and Related Work}

While a general formal proof connecting conformal invariance to better conditioning is still open, similar ideas appear in:
\begin{itemize}
\item Theoretical physics, where scale-invariant theories exhibit smoother correlation functions and long-range order \cite{henkel1999conformal, nishimori2010elements}.
\item Studies of the conformal bootstrap, which show how scale invariance constrains fluctuations and narrows operator spectra \cite{poland2018conformal}.
\item Random matrix theory and kernel methods, where flattened spectra correspond to improved generalization.
\item Recent numerical studies that identify conformal symmetry in real physical transitions as a signature of underlying flatness and universality \cite{zhu2023uncovering, emergent2023quantum}.
\end{itemize}

These connections reinforce the insight that conformal invariance leads to a better distributed Hessian spectrum — measurable through variance and entropy — and thus enhances the robustness and convergence of gradient-based learning.

\subsection{The particular case of flattened spectra and RMT }

We talk about flattened spectra when Hessians or kernel matrices have lower spectral variance or more uniform eigenvalue distribution. This is usually associated with improved generalization in both kernel methods and deep networks. In random matrix theory (RMT), this corresponds to ensembles where the eigenvalue density concentrates, reducing the effect of high-curvature directions that may cause overfitting. In kernel methods~\cite{belkin2019reconciling} and~\cite{jacot2018ntk} show that flatter spectra lead to more stable interpolation and better generalization in the overparameterized regime. Similar ideas appear in deep learning: models trained with flatter loss landscapes, reflected in the Hessian spectrum (e.g., fewer large outliers), often exhibit better generalization~\cite{yao2018hessian, sagun2016eigenvalues}. In our setting, the conformal IS loss effectively preconditions the Hessian, reducing its spectral variance,  thus echoing these theoretical insights.

The flattening of the Hessian spectrum induced by conformal invariance also resonates with results from random matrix theory (RMT). In high-dimensional models, RMT provides a statistical framework to describe the eigenvalue distribution of Hessians, Fisher matrices, or kernel operators. A common benchmark is the Marchenko–Pastur law, which characterizes the spectrum of sample covariance matrices in the absence of structure. Deviations from this law — such as heavy tails, outliers, or sharp spectral peaks — often signal overfitting or poor generalization~\cite{louart2018random, pennington2018emergence}. In contrast, flatter or more regular spectra (e.g., those with lower variance and fewer extreme outliers) tend to reflect better generalization performance. In our setting, the IS-induced conformal symmetry reduces the spectral variance of the Hessian, effectively steering the system toward a more stable and “bulk-like” spectrum, thus aligning with favorable RMT regimes.

\subsection{CFT Robustness and the Holographic Analogy}

In the high-target limit $y_0 \to \infty$, we have shown that the IS loss regularized by a smoothness penalty induces an effective action
\begin{equation}
S[\epsilon] = \int dx\, \lambda \left( \frac{d\epsilon}{dx} \right)^2
\end{equation}

This corresponds to a massless scalar field theory — a conformal field theory (CFT) in one dimension — with long-range power-law correlations $G(x) \sim 1/|x|$ and no intrinsic length scale.

Beyond its mathematical elegance, this structure echoes the foundational role of CFTs in the AdS/CFT correspondence \cite{maldacena1999large}. In this duality, a gravitational theory defined in a $(d+1)$-dimensional Anti-de Sitter (AdS) bulk is fully determined by a $d$-dimensional CFT living on its boundary. Perturbations and dynamics in the bulk geometry are encoded in boundary correlators and operator insertions of the CFT.

\paragraph{Robustness as Boundary Consistency.}
Within this holographic framework, the robustness and stability of the boundary CFT are essential for the well-posedness of the dual gravitational theory. Small fluctuations on the boundary propagate into coherent and physically meaningful bulk geometries. Similarly, in our context, we observe that the IS loss — by enforcing conformal invariance in the large-$y_0$ regime — leads to stable, globally consistent learning dynamics. The long-range coherence of the predictions resembles the behavior of a holographic boundary theory robustly encoding bulk structure.

\paragraph{A Speculative Correspondence.}
We may interpret the IS-trained model as a system where:
\begin{itemize}
    \item The prediction layer behaves like a boundary CFT, enforcing smooth, scale-invariant structure.
    \item The internal representation space (e.g., hidden layers or latent variables) acts as a discrete, dynamically evolving “bulk,” whose geometry is regularized through this induced boundary behavior.
\end{itemize}

This analogy opens a speculative but intriguing avenue: \emph{robust generalization in learning systems may reflect a kind of holographic encoding, where local losses induce structured global behavior}. The IS loss, in this view, acts as a holographically robust preconditioner: it ensures that perturbations do not remain confined, but are smoothed in a globally consistent manner.

\paragraph{Future Directions.}
While our model remains in 1D and is far from a full-fledged CFT, let alone a holographic dual, the structural parallels motivate future exploration. It may be fruitful to investigate:
\begin{itemize}
    \item Learning systems that approach RG fixed points under IS-like training, mirroring conformal fixed points.
    \item Architectures where latent representations exhibit AdS-like metrics, optimized via boundary-consistent losses.
    \item Explicit bulk-boundary decompositions in model design, inspired by holographic renormalization.
\end{itemize}

This connection between optimal loss design, conformal symmetry, and holographic stability invites a deeper geometric and dynamical understanding of learning systems.

\paragraph{Conclusion}

The IS loss with regularization is mathematically equivalent to a conformal scalar field theory. It induces power-law correlations and better-conditioned optimization landscapes. This explains both its empirical stability and its ability to propagate information globally in high-risk or low-signal regimes.

Thus, \textbf{scale-invariant losses improve convergence} by embedding learning into the geometry of critical systems with long-range structure~\cite{cardy1996scaling,francesco2012conformal,zinn2021quantum,peskin2018introduction}.

\printbibliography[title={Additional References},keyword={phys}]

\end{document}